\setlist[itemize]{leftmargin=1cm}
\setlist[enumerate]{leftmargin=1cm}
\newtheorem{theorem}{Theorem}[section]
\newtheorem{lemma}[theorem]{Lemma}
\theoremstyle{definition}
\newtheorem{assumption}[theorem]{Assumption}
\theoremstyle{remark}
\newtheorem{remark}[theorem]{Remark}
\title{MissDiff: Training Diffusion Models on Tabular Data \\ with Missing Values
}
\author[1]{Yidong~Ouyang}
\author[1]{Liyan~Xie}
\author[2]{Chongxuan~Li}
\author[3]{Guang~Cheng\thanks{Email: guangcheng@ucla.edu.}}
\affil[1]{\small
School of Data Science, The Chinese University of Hong Kong, Shenzhen}
\affil[2]{\small
Gaoling School of AI, Renmin University of China}
\affil[3]{\small Department of Statistics,
  University of California, Los Angeles}
\begin{document}

\date{\vspace{-20pt}}

\maketitle

\begin{abstract}
The diffusion model has shown remarkable performance in modeling data distributions and synthesizing data. However, the vanilla diffusion model requires complete or fully observed data for training. Incomplete data is a common issue in various real-world applications, including healthcare and finance, particularly when dealing with tabular datasets. This work presents a unified and principled diffusion-based framework for learning from data with missing values under various missing mechanisms. We first observe that the widely adopted ``impute-then-generate'' pipeline may lead to a biased learning objective. Then we propose to mask the regression loss of Denoising Score Matching in the training phase. We prove the proposed method is consistent in learning the score of data distributions, and the proposed training objective serves as an upper bound for the negative likelihood in certain cases. 
The proposed framework is evaluated on multiple tabular datasets using realistic and efficacious metrics and is demonstrated to outperform state-of-the-art diffusion model on tabular data with ``impute-then-generate'' pipeline by a large margin.

\end{abstract}

\section{Introduction}\label{sec:intro}

Diffusion models has emerged as an effective tool for modeling the data distribution and synthesize various type of data, such as images \citep{Ho2020DDPM,Song2021ScoreBasedGM, Dhariwal2021DiffusionMB, Rombach2021HighResolutionIS}, videos \citep{Ho2022VideoDM}, point clouds \citep{Luo2021DiffusionPM}, and tabular data \citep{Kim2023STaSyST, Kotelnikov2022TabDDPMMT}. It is known that such machine learning models typically rely on high-quality training data, which are usually expected to be free of missing values. In reality, it is often challenging to obtain complete data, particularly in healthcare, finance, recommendation systems, and social networks, due to privacy concerns, high cost or sampling difficulties, and the skewed distribution of user-generated content. 
For example, the respiratory rate of a patient may not have been measured, either because it was deemed unnecessary or was accidentally not recorded \citep{Yoon2017DiscoveryAC,Alaa2016PersonalizedRS,Yoon2018GAINMD}. Additionally, some information may be difficult or even dangerous to acquire, such as information obtained through a biopsy, which may not have been gathered for some reasons \citep{Yoon2018PersonalizedSP}.



In this work, we focus on learning a generative model from training data containing a significant amount of missing values, a problem that has been largely overlooked in the literature despite its widespread practical applications. Deep generative models, particularly diffusion modeimagels, can be used to augment training data and enhance the performance of image classification tasks \citep{Azizi2023SyntheticDF,You2023DiffusionMA} and adversarial robustness \citep{gowal2021improving, sehwag2021improving, Ouyang2022ImprovingAR}. Following this idea, we can achieve better performance for downstream tasks by utilizing generative model learning on incomplete data for synthetic data generation.
We will primarily utilize {\it tabular} data as examples, especially for the numerical experiments, as tabular data is a commonly encountered data type and frequently contains missing values in various applications \cite{Yoon2017DiscoveryAC,Alaa2016PersonalizedRS}. Moreover, by considering tabular data as an example, we will simultaneously study the missing value scenarios in  categorical and continuous variables, which are both contained in tabular type data. 

To deal with missing values in the training data, numerous studies propose to use various imputation methods and then train the model on the imputed data. Taking tabular data as an example, some approaches involve deleting instances (rows) or features (columns) with missing data or replacing missing values with the mean of observed values for that feature (column). Other methods employ machine learning approaches \citep{Buuren2011MICEMI,Bertsimas2017FromPM} or deep generative models for imputation tasks \citep{Yoon2018GAINMD,Biessmann2019DataWigMV,Wang2020PCGAINPC,Ipsen2020HowTD,Muzellec2020MissingDI}. It has been shown that imputation may reduce the diversity of the training data and may lead to biased performances in downstream tasks \citep{Bertsimas2021PredictionWM, Ipsen2020HowTD}.

In addition to imputation or simple deletion methods, previous work also studied learning from data with missing values and synthesizing complete data using GAN or VAE architectures \citep{Li2019MisGANLF, Li2020LearningFI,Neves2022FromMD}. Compared with our proposed framework, these methods involve training additional networks, impose certain assumptions on the missing mechanisms, and the unique challenges associated with tabular data are less investigated.

In this work, we propose a diffusion-based framework, which we call {\it MissDiff}, for learning from data with missing values. We present the theoretical justifications of {\it MissDiff} on recovering the oracle score function and upper bounding the negative likelihood on the data under mild assumptions on the missing mechanisms. To the best of our knowledge, this is the first work that learns a generative model from mixed-type data containing missing values, and the missing values are used directly in the training process without prior imputation. 
We conduct a suite of numerical experiments on mixed-type tabular data, comprising both continuous and categorical variables, under various missing mechanisms. Evaluated under several realistic and efficacious metrics, {\it MissDiff} consistently outperforms other baseline methods by a considerable margin.

Our contributions can be summarized as follows. \begin{itemize}
    \item  We propose a diffusion-based framework, which we call {\it MissDiff}, for generative model training from incomplete data. We mainly focus on tabular data generation, which contains both categorical and continuous variables. 
    \item We provide the theoretical justifications of MissDiff on recovering the oracle score function and upper bounding the negative likelihood on the data under mild assumptions on the missing mechanisms. 
    \item We conduct extensive numerical experiments on multiple real tabular datasets under different missing mechanisms to demonstrate the effectiveness of {\it MissDiff}.
\end{itemize}

The rest of the paper is organized as follows. Section \ref{sec:prelim} reviews the setup of the missing data mechanism and the diffusion model. Section \ref{sec:method} introduces the proposed method. Section \ref{sec:theory} theoretically characterizes the effectiveness of the proposed method. Numerical results are given in Section \ref{sec:numerical}. We conclude the paper in Section \ref{sec:conclusion}. All proofs and additional numerical examples and details are deferred to the appendix.

\subsection{Related Work}\label{sec:related}

One line of research focuses on different types of generative models trained directly on data with missing values. These studies carefully modify the architecture and training objectives of Generative Adversarial Network (GAN) or Variational Autoencoder (VAE) to learn from incomplete data \citep{Li2019MisGANLF,Li2020LearningFI,Ipsen2020HowTD}.


Another research direction explores learning generative models for imputing missing values in observed data \citep{Yoon2018GAINMD, Neves2022FromMD, Ipsen2020notMIWAEDG, Muzellec2020MissingDI, Tashiro2021CSDICS, Nazbal2018HandlingIH, Ma2020VAEMAD, Mattei2019MIWAEDG,Valera2017GeneralLF}. For example,  \cite{Tashiro2021CSDICS} proposes the conditional score-based generative model for time series imputation. Moreover, \citep{Zheng2022DiffusionMF} adapt the conditional score-based diffusion model proposed in \cite{Tashiro2021CSDICS} for imputing tabular data. Imputation methods cannot easily used for generating new complete data, which is the main difference with the first line of works.


Tabular data, as a mixed-type data that typically contains both categorical and continuous variables, has attracted significant attention in the field of machine learning. Tabular data synthesis has been a long-standing research topic in this area. The presence of mixed variable types and class imbalance for discrete variables make it a challenging task to model tabular data. Recently, several deep learning-based models have been proposed for generating tabular data \citep{Xu2019ModelingTD, Choi2017GeneratingMD, Srivastava2017VEEGANRM, Park2018DataSB, Kim2021OCTGANNO, Finlay2020HowTT, Kim2023STaSyST, Kotelnikov2022TabDDPMMT}. Among these methods, \citep{Kotelnikov2022TabDDPMMT} employs Gaussian transitions for continuous variables and multinomial transitions for discrete random variables, while \citep{Kim2023STaSyST} proposes a self-paced learning technique and a fine-tuning strategy for score-based models and achieves state-of-the-art performance in tabular data generation. Moreover, the discrete Score Matching methods proposed in \cite{meng2022concrete} and \cite{sun2023scorebased} can also be employed to handle discrete variables in tabular data.

\section{Problem Setup and Preliminaries}\label{sec:prelim}

\subsection{Training with Missing Data}

We aim to learn a diffusion-based generative model from training data that may contain a certain proportion of missing values. 
Following the settings in \cite{Little1988StatisticalAW, Li2019MisGANLF, Ipsen2020HowTD}, we denote the underlying complete $d$-dimensional data as $\mathbf{x}=(x_{1},\ldots,x_{d}) \in \mathcal{X}$ and assume it is sampled from the unknown true data-generating distribution $p_0(\mathbf{x})$. Here, each variable $x_i$, $i=1,\ldots,d$, can be either categorical or continuous. For each data point $\mathbf{x}$, suppose there is a binary mask $\mathbf{m}=(m_{1},\ldots,m_{d}) \in\{0,1\}^{d}$ which indicates the missing entry for the current sample, i.e.,
\[
m_{i}= 
\begin{cases} 
1 & \text { if } x_i \text { is observed, } \\ 
0 & \text { if } x_i \text { is missing. }
\end{cases}
\]
Then, the observed data $\mathbf{x}^{\text {obs}}= \mathbf{x}\odot \mathbf{m}+ \mathrm{na} \odot (\mathbf{1}-\mathbf{m})$, where $\mathrm{na}$ indicates the missing value, $\odot$ denotes element-wise multiplication, and $\mathbf{1}$ is the all-one vector.

Suppose we have $n$ complete (unobservable) training data points $\mathbf{x}_1,\ldots,\mathbf{x}_n \overset{iid}{\sim} p_0(\mathbf{x})$ and simultaneously $n$ corresponding masks $\mathbf{m}_1,\ldots,\mathbf{m}_n$ generated from a specific missing data mechanism detailed later. Then the observed data values are $S^{\text {obs}}=\{\mathbf{x}_i^{\text {obs}}\}_{i=1}^{n}$ with $\mathbf{x}_i^{\text {obs}}= \mathbf{x}_i\odot \mathbf{m}_i+ \mathrm{na} \odot (\mathbf{1}-\mathbf{m}_i)$. 
The missing mechanisms can be categorized based on the relationships between the mask $\mathbf{m}$ and the complete data $\mathbf{x}$  \citep{Little1988StatisticalAW} as follows,
\begin{itemize}
    \item  Missing Completely At Random (MCAR): mask $\mathbf{m}$ is independent with the completed data $\mathbf{x}$.
    \item Missing At Random (MAR): mask $\mathbf{m}$ only depends on the observed value $\mathbf{x}^{\text{obs}}$.
    \item Not Missing At Random (NMAR): $\mathbf{m}$ depends on the observed value $\mathbf{x}^{\text{obs}}$ and missing value.
\end{itemize}

Compared with previous work which typically develop their algorithms and theoretical foundations under the MCAR assumption \cite{Li2019MisGANLF,Ipsen2020HowTD, Yoon2018GAINMD,Li2020LearningFI}, our method and theoretical guarantees aim to provide a general framework for learning on incomplete data and generate complete data.

Our objective is to train a generative model $p_{\mathbf{\phi}}$, parametrized by the neural network parameters $\phi$, using the observed data $S^{\text {obs}}$, such that $p_{\mathbf{\phi}}$ is close to the true distribution $p_0(\mathbf{x})$ and we can efficiently generate synthetic data from $p_{\mathbf{\phi}}$. The optimal model is the one that maximizes the likelihood of the data sampling from the true data-generating distribution $p(\mathbf{x})$, which is solved via maximizing the likelihood function on training samples $\prod_i p_\phi(\mathbf{x}_i)$ and thus it logarithm $\sum_i \log p_\phi(\mathbf{x}_i)$, which corresponds to the following objective in the population sense
\[
\mathbf{\phi} = \underset{\mathbf{\phi}}{\arg\max} \mathbb{E}_{\mathbf{x}\sim p(\mathbf{x})}[\log p_{\mathbf{\phi}}(\mathbf{x})],
\] 
or equivalently, the model that minimizes the Kullback-Leibler (KL) divergence between the true data-generating distribution $p(\mathbf{x})$ and the model $p_{\mathbf{\phi}}(\mathbf{x})$. In the following, we mainly consider the score-based generative model as $p_{\mathbf{\phi}}$.



\subsection{Score-Based Generative Model}\label{sub:VDM}

In this work, we adopt the diffusion model as the prototype for developing our proposed method. We propose to train the model with missing values directly without the need for prior imputation. We first briefly review the key components of score-based generative models \cite{Ho2020DDPM, Song2021ScoreBasedGM}.




Score-based generative models are a class of generative models that learn the score function, which is the gradient of the log-density of the data distribution. These models have gained attention due to their flexibility and effectiveness in capturing complex data distributions. Following the notation in \cite{Song2021ScoreBasedGM}, the score-based generative models are based on a forward stochastic differential equation (SDE), $\mathbf{x}(t)$ with $t\in[0,T]$, defined as 
\begin{equation}\label{eq:forward}
    \mathrm{d} \mathbf{x}(t)=\mathbf{f}(\mathbf{x}(t), t) \mathrm{d} t+g(t) \mathrm{d} \mathbf{w},
\end{equation}
where $\mathbf{w}$ is the standard Wiener process (Brownian motion), $\mathbf{f}(\cdot, t): \mathbb{R}^d \rightarrow \mathbb{R}^d$ is a vector-valued function called the drift coefficient of $\mathbf{x}(t)$, and $g(\cdot): \mathbb{R} \rightarrow \mathbb{R}$ is a scalar function known as the diffusion coefficient of $\mathbf{x}(t)$. 

The solution of a stochastic differential equation is a continuous trajectory of random variables $\{\mathbf{x}(t)\}_{t\in[0,T]}$. Let $p(\mathbf{x})$ denote the path measure for the trajectory $\mathbf{x}$ on $[0,T]$, $p_t(\mathbf{x})$ denote the marginal probability density function of $\mathbf{x}(t)$, and $p(\mathbf{x}(t)|\mathbf{x}(s))$ denote the conditional probability density of $\mathbf{x}(t)$ conditioned on $\mathbf{x}(s)$, where $s<t$ is a previous time point. When constructing the SDE, we let $p_0(\mathbf{x})$ be the true data distribution, and after perturbing the data according to the SDE, the data distribution becomes $p_T(\mathbf{x})$ which is close to a tractable noise distribution, usually set as the standard Gaussian distribution.

The data generation process is performed via the reverse SDE, i.e., first sampling data $\mathbf{x}_T$ from $p_T(\mathbf{x})$ and then generate $\mathbf{x}_0$ through the reverse of \eqref{eq:forward}. For any SDE in \eqref{eq:forward}, the corresponding backward/reverse process is 
\begin{equation}\label{eq:backward}
\mathrm{d} \mathbf{x}(t)=\left[\mathbf{f}(\mathbf{x}(t), t)-g(t)^2 \nabla_{\mathbf{x}} \log p_t(\mathbf{x})\right] \mathrm{d} t+g(t) \mathrm{d} \overline{\mathbf{w}},
\end{equation}
where $\overline{\mathbf{w}}$ is a standard Wiener process when time flows backwards from $T$ to 0, and $\mathrm{d}t$ is an infinitesimal negative timestep. 

We can generate new data by running backward the reverse-time SDE \eqref{eq:backward} when the score of each marginal distribution, $\nabla_{\mathbf{x}} \log p_t(\mathbf{x})$ is known. Score Matching \citep{Hyvrinen2005EstimationON,Vincent2011ACB,Song2019SlicedSM} can be used for training a score-based model $\mathbf{s}_{\boldsymbol{\theta}}(\mathbf{x}(t), t)$ to estimate the score:
\begin{equation}\label{eq:dsm}
\boldsymbol{\theta}^*=\underset{\boldsymbol{\theta}}{\arg \min } \mathbb{E}_t\left\{\lambda(t) \mathbb{E}_{p(\mathbf{x}(0))} \mathbb{E}_{\mathbf{x}(t) \mid \mathbf{x}(0)}\left[\left\|\mathbf{s}_{\boldsymbol{\theta}}(\mathbf{x}(t), t)-\nabla_{\mathbf{x}(t)} \log p(\mathbf{x}(t) | \mathbf{x}(0))\right\|_2^2\right]\right\},
\end{equation}
where $\lambda:[0, T] \rightarrow \mathbb{R}_{>0}$ is a positive weighting function, $t$ is uniformly sampled over $[0, T]$, $\mathbf{x}(0) \sim p_0(\mathbf{x})$ and $\mathbf{x}(t) \sim p(\mathbf{x}(t) | \mathbf{x}(0))$. The local consistency of score mathcing is shown in \citep{Hyvrinen2005EstimationON}, i.e., $\mathbb{E}_{p(\mathbf{x}(0))} [\left\|\mathbf{s}_{\boldsymbol{\theta}}(\mathbf{x})-\nabla_{\mathbf{x}} \log p(\mathbf{x})\right\|_2^2]=0 \Leftrightarrow \boldsymbol{\theta}=\boldsymbol{\theta}^*$ under the assumption that there exists an unque $\boldsymbol{\theta}^*$ such that the true score function $\nabla_{\mathbf{x}} \log p(\mathbf{x})$ can be represented by $s_{\boldsymbol{\theta}^*}$.
\citep{Vincent2011ACB} builds the connection between Denoising Score Matching and Score Matching, and \citep{Song2019SlicedSM} further proves Sliced Score Matching can learn the consistent estimator of the oracle score and the asymptotic normality for the Sliced Score Matching.

\section{Method}\label{sec:method}



In general, there are two approaches to learn a generative model from incomplete data. The first approach is to construct a complete training data set first and then learn a generative model on the complete data. We can either delete instances (rows) or features (columns) with missing data or adopt ``inpute-then-generate'' paradigm, i.e., we complete the data either by traditional imputation methods or training machine learning imputation models \citep{Buuren2011MICEMI, Bertsimas2017FromPM} or deep generative models for imputation tasks \citep{Vincent2008ExtractingAC,Yoon2018GAINMD, Biessmann2019DataWigMV, Wang2020PCGAINPC, Ipsen2020HowTD, Muzellec2020MissingDI}. However, this pipeline may bring bias to the training objective. We clarify this claim in remark \ref{remark:challenge}. Moreover, some of the imputation methods require the ground truth for the missing values and increase the computational costs for learning additional networks, which highlights the need for alternative unbiased approaches that can handle missing data directly and more effectively.

The second way is to learn the generative model to deal with the incomplete data directly. In \cite{Li2019MisGANLF,Li2020LearningFI,Ipsen2020HowTD}, learning from incomplete data with GAN or VAE is proposed, which require training some additional networks and have assumptions on the missing mechanism. Moreover, the case of tabular data that contains mixed-type variables was not fully considered in those previous work. Therefore, a general framework for learning on missing data and generating complete data on tabular data is needed.

The diffusion model mentioned in Section \ref{sub:VDM} typically requires complete data for training, which is not practical for various real-world applications. To address this limitation, we propose a diffusion-based framework designed for training diffusion models on tabular data with missing values, making it more suitable for real-world scenarios where incomplete data is common.






\vspace{2pt}
\begin{remark}[Challenges when training generative models with missing data] \label{remark:challenge}
    Inspired by the analysis pipeline of ``inpute-then-regress'' \citep{Bertsimas2021PredictionWM, Ipsen2020HowTD} for the prediction task, we can study a corresponding framework for the generation task. The generative model $p_{\mathbf{\phi}}$ represents the probability distribution of the synthetic data $\mathbf{x}$. Under the maximum likelihood framework, $\mathbf{\phi}^* = \arg \max_{\mathbf{\phi}} \mathbb{E}_{\mathbf{x}\sim p_0(\mathbf{x})}[\log p_{\mathbf{\phi}}(\mathbf{x})]$. When data has missing values, the general approach, known as ``impute-then-generate'', may be used in practice. In this approach, the observed data $\mathbf{x}^{\text{obs}}$ is first imputed using an imputation model $f_{\mathbf{\varphi}}$. Then, the generative model is trained by maximizing the likelihood of imputed data, i.e., $\max_{\phi} \log p_{\mathbf{\phi}}(\mathbf{x}^{\mathrm{obs}}, \mathbf{x}^{\mathrm{miss}}:= f_{\mathbf{\varphi}}( \mathbf{x}^{\mathrm{obs}}))$ and with the special case of $f_{\mathbf{\varphi}}$ being the mean imputation, we have 
    $\max_{\phi} \log p_{\mathbf{\phi}}(\mathbf{x}^{\mathrm{obs}}, \mathbb{E}_{p_{\mathbf{\phi}}\left(\mathbf{x}^{\mathrm{miss}} \mid \mathbf{x}^{\mathrm{obs}}\right)}\left[\mathbf{x}^{\mathrm{miss}}\right])$. 
    However, this pipeline is biased because with single imputation, the conditional distribution over the missing data is discarded, and the optimal single imputation can no longer capture the data variability. 
\end{remark}
\vspace{2pt}

Motivated by the above observation, we instead train the model parameters $\mathbf{\phi}$ by maximizing the likelihood of $p_{\mathbf{\phi}}(\mathbf{x}^{\text {obs}}, \mathbf{x}^{\text {miss}})$. In general, $p_{\mathbf{\phi}}(\mathbf{x}^{\text {obs}}, \mathbf{x}^{\text {miss}})\neq p_{\mathbf{\phi}}(\mathbf{x}^{\mathrm{obs}}, \mathbb{E}_{p_{\mathbf{\phi}}\left(\mathbf{x}^{\mathrm{miss}}| \mathbf{x}^{\mathrm{obs}}\right)}[\mathbf{x}^{\mathrm{miss}}])$. Therefore, we propose {\it MissDiff}, a diffusion-based framework for learning on missing data. This approach aims to address the limitations of traditional impute-then-generate methods by incorporating the uncertainty in missing data directly into the learning process, resulting in a more accurate generative model that better reflects the true data distribution.

We propose the following Denoising Score Matching method for data with missing values. 
Instead of using Eq \eqref{eq:dsm} for learning the score-based model $\mathbf{s}_{\boldsymbol{\theta}}(\mathbf{x}(t), t)$, we propose {\it MissDiff} as solution to
\begin{align}\label{eq:obj_mask}
\boldsymbol{\theta}^*=\underset{\boldsymbol{\theta}}{\arg \min }  J_{DSM}(\boldsymbol{\theta}) :=& \frac{T}{2}\mathbb{E}_t\Big\{\lambda(t) \mathbb{E}_{\mathbf{x}^{\text{obs}}(0)}  \mathbb{E}_{\mathbf{x}^{\text{obs}}(t) \mid \mathbf{x}^{\text{obs}}(0)}  \notag\\
&\left[\left\|\left(\mathbf{s}_{\boldsymbol{\theta}}(\mathbf{x}^{\text{obs}}(t), t)-\nabla_{\mathbf{x}^{\text{obs}}(t)} \log p(\mathbf{x}^{\text{obs}}(t) \mid \mathbf{x}^{\text{obs}}(0))\right) \odot \mathbf{m}\right\|_2^2\right]\Big\},
\end{align}
where $\lambda(t)$ is a positive weighting function, $\mathbf{m}=\mathbb{I}\{\mathbf{x}^{\text{obs}}(0)=\mathrm{na}\}$ indicated the missing entries in $\mathbf{x}^{\text{obs}}$ and $p(\mathbf{x}^{\text{obs}}(t)|\mathbf{x}^{\text{obs}}(0)) = \mathcal{N}(\mathbf{x}^{\text{obs}}(t);\mathbf{x}^{\text{obs}}(0), \beta_t \mathbb{I} )$ is the Gaussian transition kernel. 
To make the transition $p(\mathbf{x}^{\text{obs}}(t)|\mathbf{x}^{\text{obs}}(0))$ and the gradient $\nabla_{\mathbf{x}^{\text{obs}}(t)} \log p(\mathbf{x}^{\text{obs}}(t) \mid \mathbf{x}^{\text{obs}}(0))$ well defined for the mixed-type data, we use 0 to replace $\mathrm{na}$ for continuous variables and a new category to represent $\mathrm{na}$ for discrete variables, which is the same operation as in \cite{Nazbal2018HandlingIH,Ma2020VAEMAD}. One-hot embedding is applied to discrete variables. More implementation details can be found in Appendix \ref{ap:training}.



More specifically, we mainly adopt the Variance Preserving (VP) SDE in this paper although Variance Exploding (VE) SDE \citep{Song2021ScoreBasedGM} is also appliable. The forward diffusion process of the Variance Preserving SDE is defined as 
\[
\mathrm{d} \mathbf{x}=-\frac{1}{2} \beta(t) \mathbf{x} \mathrm{d} t+\sqrt{\beta(t)} \mathrm{d} \mathbf{w}, 
\]
where $\left\{\beta_t \in(0,1)\right\}_{t\in(0,T)}$ is the increasing sequence denoting the variance schedule. Algorithm \ref{alg} and Algorithm \ref{alg1} demonstrate the Denoising Score Matching objective on missing data and sampling procedure of {\it MissDiff}. We write $\mathbf{x}(t)$ as $\mathbf{x}_t$ in the algorithm box for simplicity. 


\begin{minipage}{0.52\textwidth}
\begin{algorithm}[H]
    \centering
    \caption{{\it MissDiff}: Denoising Score Matching on Data with Missing Values}\label{alg}
    \footnotesize
    \begin{algorithmic}[1]
      \REQUIRE Diffusion process hyperparameter $\beta_t$, denote $\alpha_t=1-\beta_t$ and $\bar{\alpha}_t=\prod_{s=1}^t \alpha_s$.
		\REPEAT
  \STATE Sample $\mathbf{x}^{\text{obs}}_0$ according to the data distribution and missing mechanism;
  \STATE Infer mask $\mathbf{m}=\mathbf{1}[\mathbf{x}^{\text{obs}}(0)=\mathrm{na}]$;
\STATE $t \sim \operatorname{Uniform}(\{1, \ldots, T\})$;
\STATE $\boldsymbol{\epsilon} \sim \mathcal{N}(\mathbf{0}, \mathbf{I})$;
\STATE Take gradient descent step on
$$
\nabla_{\boldsymbol{\theta}}\left\|(\boldsymbol{\epsilon}-\mathbf{s}_{\boldsymbol{\theta}}(\sqrt{\bar{\alpha}_t} \mathbf{x}_0^{\text{obs}} +\sqrt{1-\bar{\alpha}_t} \boldsymbol{\epsilon}, t))\odot \mathbf{m}\right\|^2.
$$
\UNTIL{converged.}

	\end{algorithmic}
 
\end{algorithm}
\end{minipage}
\hfill
\begin{minipage}{0.4\textwidth}
\begin{algorithm}[H]
    \centering
    \caption{Variance Preserving Sampling of {\it MissDiff}}\label{alg1}
    \footnotesize
\begin{algorithmic}[1]
      \REQUIRE Diffusion process hyperparameter $\beta_t$, denote $\alpha_t=1-\beta_t$ and $\bar{\alpha}_t=\prod_{s=1}^t \alpha_s$.
		\STATE Sample $\mathbf{x}_T \sim \mathcal{N}(\mathbf{0}, \mathbb{I})$;
		\STATE $t=T$;
    \WHILE{$t\neq 1$}
        \STATE Sample $\boldsymbol{\epsilon}_t \sim \mathcal{N}(\mathbf{0}, \mathbb{I})$;
        \STATE 
        $\mathbf{x}_{t-1}=\frac{1}{\sqrt{\alpha_t}}(\mathbf{x}_t-\frac{1-\alpha_t}{\sqrt{1-\overline{\bar{\alpha}}_t}} \mathbf{s}_{\boldsymbol{\theta}}(\mathbf{x}_t, t))+\sqrt{\beta_t} \mathbf{\epsilon}_t$;
        \STATE $t=t-1$;
    \ENDWHILE
    \RETURN $\mathbf{x}_0$.
	\end{algorithmic}
\end{algorithm}
\end{minipage}

\section{Theory}\label{sec:theory}

In this section, we examine the effectiveness of {\it MissDiff} by theoretically characterizing the Score Matching objective under mild conditions on the missing mechanisms and build a further connection between Score Matching and maximizing likelihood objective for training the diffusion model.

In the following theorem, we present our first theoretical result that verifies that Denoising Score Matching on missing data can learn the oracle score, i.e, the score on complete data. Theorem \ref{thm:dsm} states that the global optimal solution of Denoising Score Matching on missing data obtained by {\it MissDiff} is the same as the oracle score.
The proof can be found in Appendix \ref{proof:thm1}.

\begin{theorem}\label{thm:dsm}
Denote $\rho_i$, $i\in \{1,2, ..., d\}$ as the percentage of missing samples for the $i$-th entry in the training data. Suppose $\rho_{\text{max}}:=\max_{i=1,\ldots,d}\rho_i<1$. Let $\boldsymbol{\theta}^*$ be the solution to the training objective of {\it MissDiff} define in Eq \eqref{eq:obj_mask}. Then we have $$
\mathbf{s}_{\boldsymbol{\theta}^*}(\mathbf{x}(t), t)= \nabla_{\mathbf{x}(t)} \log p_0(\mathbf{x}(t)).
$$
\end{theorem}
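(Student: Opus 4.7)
The plan is to extend Vincent's (2011) equivalence between Denoising Score Matching and Explicit Score Matching to the masked objective in \eqref{eq:obj_mask}, and then to invoke the strict inequality $\rho_i<1$ to pin down every coordinate of the oracle score. First, because the Gaussian transition $p(\mathbf{x}^{\text{obs}}(t)\mid\mathbf{x}^{\text{obs}}(0))=\mathcal{N}(\mathbf{x}^{\text{obs}}(0),\beta_t \mathbb{I})$ factorizes across coordinates, its score splits coordinate-wise as $\nabla_i\log p = -(x_i^{\text{obs}}(t)-x_i^{\text{obs}}(0))/\beta_t$, and the masked squared error decomposes as $\sum_{i=1}^d m_i\bigl(s_{\theta,i}-\nabla_i\log p\bigr)^{2}$. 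This reduces the $t$-slice of the objective to a sum of per-coordinate losses $L_{t,i}(\theta)$, each of which I would rewrite using $m_i\in\{0,1\}$ as $L_{t,i}(\theta)=(1-\rho_i)\,\mathbb{E}\!\left[(s_{\theta,i}-\nabla_i\log p)^{2}\mid m_i=1\right]$. On the event $\{m_i=1\}$ the observed value $x_i^{\text{obs}}(0)$ coincides with the true coordinate $x_i(0)$, so the DSM residual in the $i$-th coordinate agrees with the standard Vincent residual.

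Next, I would adapt the Vincent completion-of-the-square argument to this conditional expectation. Expanding the square and applying Gaussian integration-by-parts along the $i$-th coordinate (the only direction in which $\nabla_i\log p$ acts) rewrites $L_{t,i}(\theta)$, up to a $\theta$-independent additive constant, as $(1-\rho_i)\,\mathbb{E}\!\left[(s_{\theta,i}(\mathbf{x}^{\text{obs}}(t),t)-\partial_i\log q_t(\mathbf{x}^{\text{obs}}(t)))^{2}\mid m_i=1\right]$, where $q_t$ denotes the density of the noised input. The function-space minimizer in each coordinate is $\partial_i\log q_t$, and since $\rho_i<1$ makes every prefactor strictly positive, each coordinate of $\mathbf{s}_{\boldsymbol{\theta}^*}$ is determined uniquely by the coordinate-wise optimality condition.

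The final and main obstacle is to identify $\partial_i\log q_t$ with the oracle partial derivative $\partial_i\log p_0(\mathbf{x}(t))$ on the support where the theorem's conclusion is asserted. This is where the structure of the missingness mechanism must enter: under MCAR the mask is independent of the data, so conditioning on $m_i=1$ does not bias the marginal in the $i$-th coordinate and $q_t$ agrees with the complete-data noised marginal $p_t$ on the relevant support; under MAR one has to argue an analogous invariance. Steps~1--3 are essentially a careful masked reprise of Vincent's calculation, while this identification step is the substantive content that ties the coordinate-wise optima of \emph{MissDiff} back to the true-data score $\nabla_{\mathbf{x}(t)}\log p_0(\mathbf{x}(t))$.
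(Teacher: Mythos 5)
Your proposal takes essentially the same route as the paper's proof: reduce the masked DSM loss to a weighted score-matching objective by taking the expectation of the mask (using $m_i\in\{0,1\}$, so the per-coordinate weight is $1-\rho_i$), invoke the Vincent DSM--ESM equivalence, and use $\rho_i<1$ to make every coordinate's weight strictly positive so the function-space minimizer is the oracle score. The identification step you single out as the ``main obstacle'' is exactly what the paper resolves by assumption --- it posits $\mathbb{E}_{p(\mathbf{m}\mid\mathbf{x}(0))}[\mathbf{m}]\equiv\mathbf{1}-\boldsymbol{\rho}$ independently of $\mathbf{x}(0)$, i.e.\ the MCAR-type invariance you describe --- so your account matches the paper's argument in substance, and the remaining subtlety you both leave implicit (that the network input $\mathbf{x}^{\text{obs}}(t)$ still differs from $\mathbf{x}(t)$ in the masked coordinates) is present in the paper's proof as well.
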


It is well known that with careful design of the weighting function $\lambda_t$, Denoising Score Matching can upper bound the negative log-likelihood of the diffusion model on the complete data \citep{Song2021MaximumLT}. Therefore, it is straightforward to extend such connection to incomplete data scenarios, which is detailed in the following theorem. These results provide insightly connections between the training objective of {\it MissDiff} and the maximum likelihood objective of the generative model on observed data. 

\begin{theorem}\label{thm:ml_real}
The objective function of Denoising Score Matching on missing data is an upper bound for the negative likelihood of the generative model on observed data $\mathbf{x}^{\text{obs}}$ up to a constant, that is, for $\lambda_t=\beta_t$ and under the same condition of Theorem~\ref{thm:dsm} and mild regularity conditions detailed in Appendix \ref{ap:proof2}, we have
$$
-\mathbb{E}_{p(\mathbf{x}^{\text{obs}})}\left[\log p_{\mathbf{\theta}}(\mathbf{x})\right] \leq \frac{1}{1-\rho_{\text{max}}} J_{\mathrm{DSM}}\left(\mathbf{\theta} \right)+C_1,
$$
where $C_1$ is a constant independent of $\mathbf{\theta}$.
\end{theorem}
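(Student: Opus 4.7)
The plan is to chain two inequalities: (i) the likelihood--weighting theorem of \citep{Song2021MaximumLT}, which says that the \emph{full} (unmasked) DSM objective upper--bounds the negative log--likelihood under the choice $\lambda(t)=\beta_t=g(t)^2$; and (ii) a comparison lemma showing that the masked MissDiff objective differs from the full one by at most the multiplicative factor $1-\rho_{\max}$. Schematically,
\[
-\mathbb{E}_{p(\mathbf{x}^{\text{obs}})}\!\left[\log p_{\boldsymbol{\theta}}(\mathbf{x})\right] \;\le\; J_{\mathrm{DSM}}^{\text{full}}(\boldsymbol{\theta}) + C_1 \;\le\; \frac{1}{1-\rho_{\max}}\, J_{\mathrm{DSM}}(\boldsymbol{\theta}) + C_1,
\]
where $J_{\mathrm{DSM}}^{\text{full}}$ denotes the DSM objective \eqref{eq:dsm} evaluated on the oracle complete distribution $p_0$. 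Step~(i) I would invoke verbatim: for the VP--SDE with $\lambda(t)=\beta(t)$, the DSM loss equals, up to a $\boldsymbol{\theta}$--independent constant (entropy of $p_0$ plus the KL between $p_T$ and the prior), an evidence upper bound on the negative log--likelihood. All of this is packaged into $C_1$, and the ``mild regularity conditions'' referenced in the statement are presumably those needed to apply this theorem (Novikov--type conditions on $\mathbf{s}_{\boldsymbol{\theta}}$, bounded second moments of $p_0$, smoothness of $\log p_t$), together with the condition $\rho_{\max}<1$ from Theorem~\ref{thm:dsm}.

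The substance of the argument is step~(ii). Because the VP transition kernel is a product Gaussian, the DSM integrand splits coordinate--wise as
\[
\left\| \mathbf{s}_{\boldsymbol{\theta}}(\mathbf{x}(t),t) - \nabla\log p(\mathbf{x}(t)\mid\mathbf{x}(0)) \right\|_2^2 \;=\; \sum_{i=1}^d \tilde r_i(t)^2,
\]
and masking by $\mathbf{m}$ simply deletes those coordinates $i$ with $m_i=0$. Taking joint expectation over $(\mathbf{x}(0),\mathbf{m})$ and using the regularity assumption that at each coordinate $\mathbb{E}[m_i\,\tilde r_i(t)^2] \ge (1-\rho_i)\,\mathbb{E}[\tilde r_i(t)^2]$ (an equality under MCAR, since $m_i$ is then independent of everything else), one obtains
\[
J_{\mathrm{DSM}}(\boldsymbol{\theta}) \;\ge\; \sum_{i=1}^d (1-\rho_i)\, J_{\mathrm{DSM}}^{\text{full},i}(\boldsymbol{\theta}) \;\ge\; (1-\rho_{\max})\, J_{\mathrm{DSM}}^{\text{full}}(\boldsymbol{\theta}),
\]
where $J_{\mathrm{DSM}}^{\text{full},i}$ denotes the $i$-th coordinate's contribution to the full objective. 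Dividing through yields the second inequality, and combining with step~(i) completes the argument.

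The hard part will be justifying the per--coordinate factorization in step~(ii) beyond strict MCAR. Under MCAR the bound is an equality, since $m_i$ is independent of the clean data and of the injected diffusion noise, so $\mathbb{E}[m_i\,\tilde r_i(t)^2] = (1-\rho_i)\,\mathbb{E}[\tilde r_i(t)^2]$ is immediate. Under MAR or NMAR one must either condition on the observed entries and argue that $\mathbb{E}[m_i \mid \mathbf{x}(0)] \ge 1-\rho_i$ holds pointwise, or impose a structural bound on how strongly the missingness depends on the hidden values; I would expect this to be exactly what the regularity package in Appendix~\ref{ap:proof2} is designed to supply. Once that coordinate--wise lower bound is established, everything else is routine algebra plus the direct application of the likelihood--weighting theorem.
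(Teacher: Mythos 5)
Your two-step plan is essentially the paper's argument: both reduce to the likelihood-weighting machinery of \cite{Song2021MaximumLT} and both extract the factor $1/(1-\rho_{\max})$ from the elementary coordinate-wise inequality $\sum_i (1-\rho_i)v_i^2 \ge (1-\rho_{\max})\sum_i v_i^2$ combined with $\mathbb{E}[m_i]=1-\rho_i$. The only structural difference is bookkeeping: the paper does not invoke Corollary~1 of \cite{Song2021MaximumLT} as a black box but re-runs its proof (data-processing inequality on path measures, chain rule for KL, Girsanov) on the process started from $\mathbf{x}^{\text{obs}}$, inserts the mask \emph{inside} the Girsanov step to pass from the unmasked to the masked score-matching integrand, and only at the end converts masked SM to masked DSM via a dedicated integration-by-parts lemma (Lemma~A.1); you instead compare the two DSM objectives directly after the fact.

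One point needs fixing, and it is exactly where you deviate from the paper. You define $J_{\mathrm{DSM}}^{\text{full}}$ as the DSM objective on the \emph{oracle complete} distribution $p_0$, so its residuals are $\tilde r_i$ evaluated at $\mathbf{s}_{\boldsymbol{\theta}}(\mathbf{x}(t),t)$, whereas the masked objective $J_{\mathrm{DSM}}$ feeds the network $\mathbf{x}^{\text{obs}}(t)$, whose missing coordinates have been zeroed (or re-categorized) before diffusing. The residuals in the two objectives are therefore \emph{different random variables} even on observed coordinates, so the identity $\mathbb{E}[m_i\,\tilde r_i(t)^2]=(1-\rho_i)\,\mathbb{E}[\tilde r_i(t)^2]$ does not follow from MCAR alone ($m_i$ is not independent of $\tilde r_i$ once $\tilde r_i$ depends on the masked input), and step~(ii) as written compares apples to oranges; relatedly, step~(i) applied to $p_0$ bounds $-\mathbb{E}_{p_0}[\log p_{\boldsymbol{\theta}}]$ rather than the theorem's $-\mathbb{E}_{p(\mathbf{x}^{\text{obs}})}[\log p_{\boldsymbol{\theta}}]$. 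The repair is to define the unmasked reference objective on the na-replaced observed process $\{\mathbf{x}^{\text{obs}}(t)\}$, which is precisely what the paper does by running the entire Girsanov argument on $\mathbf{x}^{\text{obs}}$; then both objectives share the same residuals and your factorization goes through verbatim. Your suspicion about MAR/NMAR is also correct: the appendix explicitly restricts to MCAR and assumes $\mathbb{E}[\mathbf{m}\mid\mathbf{x}(0)]\equiv \mathbf{1}-\boldsymbol{\rho}$, so no additional structural condition beyond that is supplied there.
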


The proof of Theorem \ref{thm:ml_real} can be found in Appendix \ref{ap:proof2}. When there is missing value, Theorem \ref{thm:ml_real} shows that the Denoising score matching on incomplete data still upper bounds the likelihood of the incomplete data up to a constant coefficient $1/(1-\rho_{\text{max}})$.
When there is no data missing, $\mathbf{\rho}$ is all zero vector, then we have $1/(1-\rho_{\text{max}})=1$ and Theorem \ref{thm:ml_real} degenerates to the Corollary 1 in \cite{Song2021MaximumLT}, i.e., \[
-\mathbb{E}_{p(\mathbf{x})}[\log p_{\mathbf{\theta}}(\mathbf{x})] \leq J_{\mathrm{DSM}}(\boldsymbol{\theta} ; g(\cdot)^2)+C_1, 
\]where the $J_{\mathrm{DSM}}(\boldsymbol{\theta}; g(\cdot)^2)$ is the Denoising Score Matching on complete data.



\section{Experiments}\label{sec:numerical}
In this section, we demonstrate the effectiveness of the proposed method {\it MissDiff} using simulations and two real-world tabular datasets. We introduce the experimental setup, including datasets, baseline models, and evaluation criterion, in Section \ref{sec:setup}. The detailed experiment results under different designs of MCAR is presented in Section \ref{sec:exp}. The detailed experiment results under the missing mechanisms MAR and NMAR are shown in Section \ref{section:abl}.


\subsection{Experimental Setup}
\label{sec:setup}

\paragraph{Datasets}
We present a suite of numerical evaluations of the proposed {\it MissDiff} approach on a simulated Bayesian Network data, a real Census tabular dataset \citep{census}, and the MIMIC4ED tabular dataset \citep{Xie2022BenchmarkingED}, with various proportions of missing values. The primary goal is to compare the effectiveness in synthetic data generation of the proposed {\it MissDiff} with the alternative baselines detailed later in this subsection. 



The detailed description of the dataset can be found in Table \ref{tab:dataset}, which specifies the number of training data (\#Train), the number of testing data (\#Test), the number of categorical (discrete) variables in the tabular dataset (\#Categorical), and the number of continuous variables (\#Continuous). Moreover, the last column shows the evaluation task we adopted as detailed later. The details of the data generated from a Bayesian Network can be found in Appendix \ref{ap:BN}.

\begin{table*}[htbp]
  
  \centering
  \caption{Synethetic and Real-World Datasets Used in Experiments.}
    \resizebox{\textwidth}{!}{
    \begin{tabular}{l c c c c c c}
    \toprule
    Dataset & \#Train & \#Test & \#Categorical & \#Continuous & Utility  \\
     \midrule
    Bayesian Network & 2000     & 20000    & 3 & 2 & Multi-class classification  \\
    Census \citep{census} & 16000     & 4000    & 9 & 6 & Binary classification  \\
    MIMIC4ED \citep{Xie2022BenchmarkingED} & 353150     &   88287   & 46 & 27 & Regression \\
    
    \bottomrule
    \end{tabular}}
  \label{tab:dataset}%
\end{table*}%

\paragraph{Choice of Masks under Different Missing Mechanisms}
\label{par:mismech}
To evaluate the performance of {\it MissDiff} on different missing mechanisms, we give a detailed explanation of the practical implementatin of MCAR \citep{Li2019MisGANLF,Ipsen2020HowTD, Yoon2018GAINMD,Li2020LearningFI}, MAR, and NMAR \citep{Muzellec2020MissingDI}. 

\begin{itemize}
\item MCAR: there are three types of missing mechanisms in MCAR.
\begin{itemize}
\item Row Missing. For a given missing ratio $\alpha\in(0,1)$, we have the number of elements missing in each row (i.e., for each sample $\mathbf{x}_i$) is $\lfloor d\alpha \rfloor$, where $\lfloor z \rfloor$ is the greatest integer less than $z$, and the location/index of the missing entries is randomly chosen according to the uniform distribution.

\item Column Missing. For a given missing ratio $\alpha$, we have the number of elements missing in each column (for each feature) is $\lfloor n\alpha \rfloor$, and the location/index of the missing entries is randomly chosen according to the uniform distribution.
\item Independent Missing. Each entry in the table is masked missing according to the realization of a Bernoulli random variable with parameter $\alpha$.
\end{itemize}
\item MAR: a fixed subset of variables that cannot have missing values is first sampled. Then, the remaining variables will have missing values according to a logistic model with random weights, which takes the non-missing variables as inputs. The outcome of this logistic model is re-scaled to attain a given missing ratio $\alpha$.
\item NMAR: the same pipeline as MAR with the inputs of the logistic model are masked by the MCAR mechanism. We refer to \cite{Muzellec2020MissingDI} for more detailed explanations.
\end{itemize}
\begin{remark}\label{lemma1}
Under the three missing mechanisms in MCAR,  with the missing ratio parameter set as $0<\alpha<1$, condition in Theorem \ref{thm:dsm} can be satisfied with probability at least $1-\delta$, where $\delta=\max\{(\frac{\alpha d -1}{d})^n d, \alpha ,\alpha^n d\}$ and it will be sufficiently small when $\alpha$ is small and $n$ is sufficiently large.
\end{remark}
Remark \ref{lemma1} gives the guarantee that {\it MissDiff} can recover the oracle score under MCAR with high probability. In the following tables in Sections \ref{sec:exp} and \ref{section:abl}, we adopt the missing ratio $\alpha=0.2$ and XGBoost for the downstream tasks with no specific clarification. More experimental results can be found in Appendix \ref{ap:exper}.


\paragraph{Baseline Methods} \label{sec:baseline}

We compare the proposed method with several baseline methods for synthetic data generation training on data with missing values. The baseline methods are described as follows:
\begin{enumerate}
\item {\it Diff-delete}: Learn a vanilla diffusion model after deleting rows containing missing values.
\item {\it Diff-mean}: Learn a vanilla diffusion model after imputing missing values using the mean value in that column.
\item STaSy \citep{Kim2023STaSyST} with the above two data completion methods. STaSy is the state-of-the-art diffusion model on tabular data, which outperforms MedGAN \citep{Choi2017GeneratingMD}, VEEGAN \citep{Srivastava2017VEEGANRM}, CTGAN \citep{Xu2019ModelingTD}, TVAE \citep{Xu2019ModelingTD}, TableGAN \citep{Park2018DataSB}, OCTGAN \citep{Kim2021OCTGANNO}, RNODE \citep{Finlay2020HowTT} by a large margin. 
\end{enumerate}
We use the variance-preserving SDE with the time duration $T = 100$ for Bayesian Network and Census dataset and $T = 150$ for MIMIC4ED dataset. We use the standard pre/post-processing of tabular data to deal with mixed-type data \citep{Kim2023STaSyST,Kotelnikov2022TabDDPMMT,Zheng2022DiffusionMF}. i.e., we use the min-max normalization for the continuous variables and reverse its scaler when generation. We use one-hot embedding for the discrete variables and use the rounding function after the softmax function when generation. We train the diffusion model for 250 epochs with batch size 64. For more details, please refer to Appendix \ref{ap:training}.


\paragraph{Evaluation Criterion}\label{sec:evaluation}
Following \cite{Xu2019ModelingTD, Kim2023STaSyST, Kotelnikov2022TabDDPMMT}, we use two types of criterion, {\it fidelity} and {\it utility}, to evaluate the quality of the synthetic data generated. To evaluate the {\it fidelity} of synthetic data compared with real data, we adopt a model-agnostic library, SDMetrics \citep{sdmetrics}. The result is a float number range from 0 to 100\%. The larger the score, the better the overall quality of synthetic data is.

To evaluate the {\it utility} of synthetic data, we follow the same pipeline of \cite{Kim2023STaSyST}, i.e., training various models, including Decision Tree, AdaBoost, Logistic/Linear Regression, MLP classifier/regressor, RandomForest, and XGBoost, on synthetic data, and validate the model on original training data, and test them with real test data. For classification tasks, we mainly use classification accuracy and also report AUROC, F1, and Weighted-F1 in Appendix \ref{ap:exper}. For regression tasks, we mainly use the Root Mean Squared Error (RMSE) and also report $R^2$ in the Appendix \ref{ap:exper}. All the experiments are obtained from 3 repetitions.

\subsection{Experiment Results}
\label{sec:exp}
\subsubsection{Simulation Study}


\textit{Q1: How does {\it MissDiff} perform on different missing ratios against the vanilla diffusion model learned on the data completed by two baseline methods mentioned in section \ref{sec:baseline}? }

Figure \ref{tab:BN} summarizes the SDMetrics score on the simulated Bayesian Network dataset example. With the same diffusion model architecture and the same training hyperparameter, {\it MissDiff} achieves consistently better results against the vanilla diffusion model deleting the incomplete row or using the mean value for imputation when the missing ratio varies from 0.1 to 0.9. Moreover, the advantage of {\it MissDiff} becomes more obvious for large missing ratios. These experimental results verify the motivation of {\it MissDiff} proposed in Remark \ref{remark:challenge} that the learning objective of impute-then-generate is biased. Directly learning on the missing data can significantly enhance the performance of the learned generative model.

\begin{figure}[th]
\centering
\subfigure[Row missing]{\includegraphics[width=.32\textwidth]{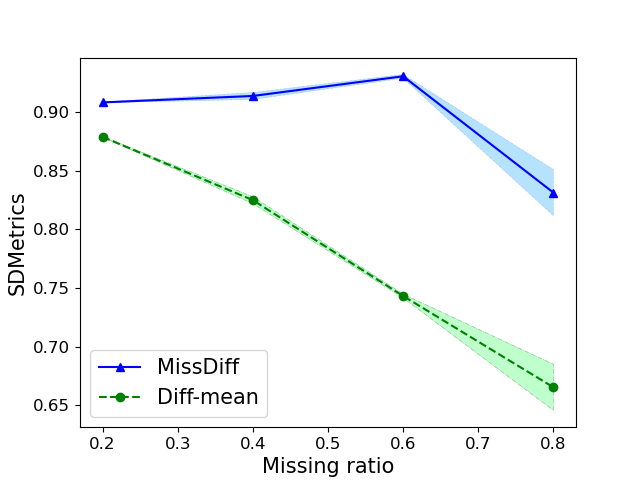}}
\subfigure[Column missing]{\includegraphics[width=.32\textwidth]{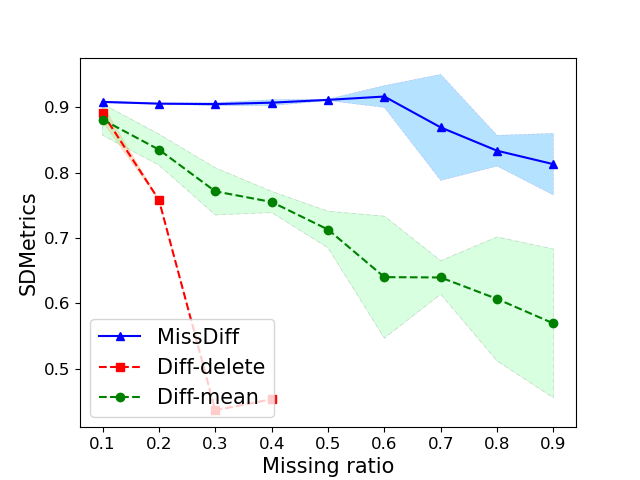}}
\subfigure[Independent missing]{\includegraphics[width=.32\textwidth]{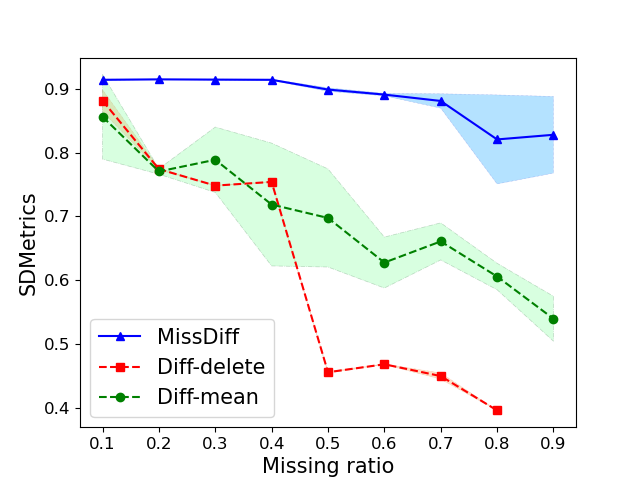}}
\caption{{\it Fidelity} evaluation of {\it MissDiff} on data generated by Bayesian Network under different missing ratios. We shade the area between mean $\pm$ std.
}
\label{tab:BN}
\end{figure}

\subsubsection{Real Tabular Datasets} \label{sec:expri}
\textit{Q2: How does {\it MissDiff} perform on more complicated real-world data and compared with state-of-the-art generative model on tabular data? }

Table \ref{tab:t1} and \ref{tab:t2} demonstrate the effectiveness of {\it MissDiff} on the Census dataset under MCAR. STaSy is a state-of-the-art generative model for tabular data, which means {\it MissDiff} achieves quite good performance on learning from incomplete data and generate complete data. More importantly, {\it MissDiff} achieves better performance than {\it STaSy-delete} and {\it STaSy-mean} even without adopting the self-paced learning technique and the fine-tuning strategy used by STaSy. Moreover, the results of {\it STaSy-delete} and {\it STaSy-mean} in Tables  \ref{tab:t1} and \ref{tab:t2} are obtained by training diffusion model for 1000 epochs, compared with 250 epochs of {\it MissDiff}, {\it Diff-delete}, and {\it Diff-mean}. If we reduce the training epochs of {\it STaSy-delete} and {\it STaSy-mean} to 250 epochs, the performance will degrade significantly, which can be found in Appendix \ref{ap:exper}. Compared with the state-of-the-art performance of STaSy for tabular data generation, its relatively worse performance here indicates that STaSy could be susceptible to missing data.
\begin{table}[htbp]
\centering
 \caption{{\it Fidelity} evaluation of {\it MissDiff} on Census dataset. ``-'' denotes the corresponding method cannot applied since no data $\mathbf{x}_i$ will be left after deleting the incomplete data. The {\it larger} the score, the {\it better} the overall quality of synthetic data is.}
\begin{tabular}{c|c|c|c|c|c}
\toprule
 & {\it MissDiff} & {\it Diff-delete}  & {\it Diff-mean} & {\it STaSy-delete}  & {\it STaSy-mean} \\
\midrule
Row Missing                         & \textbf{80.59}\%   &     -    & 76.92\%  & -& 56.75\% \\
Column Missing                     & \textbf{82.70}\%   & 75.03\% & 76.17\%  & 56.90\% & 51.54\% \\
Independent Missing                & \textbf{83.16}\%   & 74.94\% & 76.60\% & 56.07\% & 57.06\% \\

\bottomrule
\end{tabular}
\label{tab:t1}
\end{table}

\begin{table}[htbp]
\centering

 \caption{{\it Utility} (classification accuracy) evaluation of {\it MissDiff} on Census dataset. The {\it larger} the accuracy, the {\it better} the performance.}
\begin{tabular}{c|c|c|c|c|c}
\toprule   & {\it MissDiff} & {\it Diff-delete}  & {\it Diff-mean}  & {\it STaSy-delete}  & {\it STaSy-mean} \\
\midrule
Row Missing                        & \textbf{79.48}\%   &    -     & 78.45\%   & -  &  70.79\% \\
Column  Missing                    & 71.68\%   & 72.89\% & \textbf{79.60}\%  & 68.96\% & 74.47\%\\
Independent Missing                & \textbf{79.49}\%   & 75.39\% & 75.96\% & 78.36\% & 77.34\%\\ 

\bottomrule
\end{tabular}
\label{tab:t2}
\end{table}

\noindent\textit{Q3: How does {\it MissDiff} perform on real application of large-scale Electronic Health Records data?}

Table \ref{tab:t3} and \ref{tab:t4} show the performance of {\it MissDiff} on the MIMIC4ED dataset under MCAR. On this large dataset with dozens of continuous and discrete variables, {\it MissDiff} gives consistently better performance with the same training epochs (250 epochs).

\begin{table}[htbp]
\centering
 \caption{{\it Fidelity} evaluation of {\it MissDiff} on MIMIC4ED dataset.}
\begin{tabular}{c|c|c|c|c|c}
\toprule
 & {\it MissDiff} & {\it Diff-delete}  & {\it Diff-mean}  & {\it STaSy-delete}  & {\it STaSy-mean}  \\
\midrule
Row Missing                         & \textbf{84.45}\%    &  -    & 75.22\% &  -  &  82.94\% \\
Column Missing                     & \textbf{79.24}\%   &   -   & 76.57\% &   -   & 79.03\%\\
Independent Missing                & \textbf{78.01}\%   &   -   & 76.16\% &   -   & 77.21\% \\

\bottomrule
\end{tabular}
\label{tab:t3}
\end{table}

\begin{table}[ht!]
\centering
 \caption{{\it Utility} (RMSE) evaluation of {\it MissDiff} on MIMIC4ED dataset. The {\it lower} the RMSE, the {\it better} the performance.}
\begin{tabular}{c|c|c|c|c|c}
\toprule   & {\it MissDiff} & {\it Diff-delete}  & {\it Diff-mean}  & {\it STaSy-delete}  & {\it STaSy-mean}  \\
\midrule
Row Missing                          & \textbf{1.826}   &   -      & 2.166  &   -    & 1.894 \\
Rolumn Missing                     & \textbf{1.834}   & - & 2.011  &   -      &  1.935\\
Independent Missing                & \textbf{1.852}   & - & 2.483  &   -      &  1.972 \\ 

\bottomrule
\end{tabular}
\label{tab:t4}
\end{table}

\subsection{Ablation Study}
\label{section:abl}
\textit{Q4: How does {\it MissDiff} perform on other missing mechanisms beyond MCAR, i.e., MAR and NMAR?} 

Table \ref{tab:t5} and \ref{tab:t6} demonstrate the effectiveness of {\it MissDiff} on the Census dataset beyond MCAR. The results show the great potential of learning directly on the missing data when the missing mechanism is not MCAR, which cannot be easily dealt with by previous methods \citep{Li2019MisGANLF,Ipsen2020HowTD, Yoon2018GAINMD,Li2020LearningFI}.

\begin{table}[ht!]
\centering
 \caption{{\it Fidelity} evaluation of {\it MissDiff} on Census dataset under MAR, NMAR with missing ratio~0.2.}
\begin{tabular}{c|c|c|c}
\toprule   & {\it MissDiff} & {\it Diff-delete}  & {\it Diff-mean}    \\
\midrule
MAR                         & \textbf{77.45}\%	& 73.78\%	& 76.08\%   \\
NMAR                     & \textbf{77.88}\%    &	75.72\%    &	76.97\%   \\

\bottomrule
\end{tabular}
\label{tab:t5}
\end{table}

\begin{table}[ht!]
\centering
 \caption{{\it Utility} (classification accuracy) evaluation of {\it MissDiff} on Census dataset under MAR, NMAR.}
\begin{tabular}{c|c|c|c}
\toprule   & {\it MissDiff} & {\it Diff-delete}  & {\it Diff-mean}    \\
\midrule
MAR                         & \textbf{79.95}\%	& 69.475\%	& 77.425\%   \\
NMAR                     & \textbf{80.95}\%    &	66.5\%    &	80.025\%   \\

\bottomrule
\end{tabular}
\label{tab:t6}
\end{table}


\noindent \textit{Q5: How does {\it MissDiff} perform on imputation task?} 

Furthermore, we validate the performance of the propose {\it MissDiff} on imputation tasks for the Census dataset. We compare {\it MissDiff} with state-of-the-art imputation approaches in Table \ref{tab:imputation}. The result shows that although designed for generation tasks, {\it MissDiff} also performs well for imputation tasks.

\begin{table}[ht!]
\centering
\caption{Imputation result comparisons on the Census dataset. The {\it lower} the RMSE, the {\it better} the performance.}
 \resizebox{0.99\textwidth}{!}{
 \begin{tabular}{ccccccc}
\toprule
Method & Mean /Mode & MICE(linear)\citep{Buuren2011MICEMI} & MissForest \citep{Stekhoven2015missForestNM}& GAIN\citep{Yoon2018GAINMD} & CSDI$\_$T \citep{Zheng2022DiffusionMF} & {\it MissDiff}\\
\midrule
RMSE & 0.120 &0.101 &0.112 &0.123 &0.099&\textbf{0.087}\\
\bottomrule
\end{tabular}}
\label{tab:imputation}
\end{table}

\section{Conclusion and Discussion}\label{sec:conclusion}

We propose a diffusion-based generative framework, called {\it MissDiff}, for synthetic data generation trained on data with missing values directly. Compared with traditional methods of handling missing data, such as deletion or imputation, which may lead to reduced data diversity and biased performance, {\it MissDiff} offers a promising alternative that directly handles missing data without the need for imputation or deletion. Theoretical justification for {\it MissDiff}'s effectiveness is provided. Moreover, extensive numerical experiments demonstrate strong empirical evidence for the effectiveness of {\it MissDiff}. 

\paragraph{Limitations and broader impact}
Overall, this research presents a promising direction for handling missing data in generative model training. The proposed framework, {\it MissDiff}, has potential applications in a wide range of domains where missing data is a common issue. A potential limitation of this work is  that it has only been empirically validated on tabular data. For future directions, it would be interesting to see how {\it MissDiff} performs empirically with more complicated data types such as video or language data. Furthermore, further research could explore the theoretical effectiveness of {\it MissDiff} on the utility perspective or differential privacy perspective.

\section{Acknowledgement}
Yidong Ouyang is partially supported by the Shenzhen Research Institute of Big Data PhD Fellowship. Yidong Ouyang and Liyan Xie are partially supported by UDF01002142 through The Chinese University of Hong Kong, Shenzhen. Guang Cheng is partially supported by Office of Naval Research, ONR (N00014-22-1-2680), NSF – SCALE MoDL (2134209), and Meta gift fund.


\appendix

\section{Proofs for Section 4}

\subsection{Proof of Theorem \ref{thm:dsm}}
\label{proof:thm1}

In order to show Theorem \ref{thm:dsm}, we aim to show that the optimal solution $\boldsymbol{\theta}^*$, which minimizes the  objective function $J_{DSM}(\boldsymbol{\theta})$ satisfies $\mathbf{s}_{\boldsymbol{\theta}^*}(\mathbf{x}(t), t)= \nabla_{\mathbf{x}(t)} \log p_t(\mathbf{x}(t))$, i.e., the optimal solution to the population loss function can recover the oracle score function.

For the Gaussian transition distribution that we used with the isotropic covariance matrix, the score on the incomplete data is equivalent to the score on the complete data when performing element-wise multiplication with mask, i.e., $\nabla_{\mathbf{x}^{\text{obs}}(t)} \log p(\mathbf{x}^{\text{obs}}(t) |\mathbf{x}^{\text{obs}}(0)) \odot \mathbf{m} = \nabla_{\mathbf{x}(t)} \log p(\mathbf{x}(t) |\mathbf{x}(0)) \odot \mathbf{m}$\footnote{Assume $p(\mathbf{x}^{\text{obs}}(t) |\mathbf{x}^{\text{obs}}(0))=\mathcal{N}(\mathbf{x}^{\text{obs}}(t);\mathbf{\mu}^{\text{obs}},\Sigma)$ and $p(\mathbf{x}(t) |\mathbf{x}(0))=\mathcal{N}(\mathbf{x}(t);\mathbf{\mu},\Sigma)$, with $\Sigma=(1-\bar{\alpha}_t)\mathbb{I}$ and $\mathbf{\mu}^{\text{obs}} =\mathbf{\mu} \odot \mathbf{m}$. It is not hard to see $\nabla_{\mathbf{x}^{\text{obs}}(t)} \log p(\mathbf{x}^{\text{obs}}(t) |\mathbf{x}^{\text{obs}}(0)) \odot \mathbf{m} = -(\mathbf{x}^{\text{obs}}(t)-\mathbf{\mu}^{\text{obs}})\odot \mathbf{m} = -(\mathbf{x}(t)-\mathbf{\mu})\odot \mathbf{m} = \nabla_{\mathbf{x}(t)} \log p(\mathbf{x}(t) |\mathbf{x}(0)) \odot \mathbf{m}$. }, where $\mathbf{m}=\mathbbm{1}\{\mathbf{x}^{\text{obs}}(0)=\mathrm{na}\}$ indicated the missing entries in $\mathbf{x}^{\text{obs}}(0)$. Therefore, under certain conditions, we may first relate the Denosing Score Matching objective on missing data to the Denosing Score Matching objective on the complete data,
$$
\begin{aligned}
&\mathbb{E}_{p(\mathbf{x}^{\text{obs}}(0),\mathbf{m})}  \mathbb{E}_{p(\mathbf{x}^{\text{obs}}(t) | \mathbf{x}^{\text{obs}}(0))}  [\|(\mathbf{s}_{\boldsymbol{\theta}}(\mathbf{x}^{\text{obs}}(t), t)-\nabla_{\mathbf{x}^{\text{obs}}(t)} \log p(\mathbf{x}^{\text{obs}}(t) |\mathbf{x}^{\text{obs}}(0))) \odot \mathbf{m}\|_2^2] \\
& = \mathbb{E}_{p(\mathbf{x}(0),\mathbf{m})}  \mathbb{E}_{p(\mathbf{x}(t) | \mathbf{x}(0))}  [\|(\mathbf{s}_{\boldsymbol{\theta}}(\mathbf{x}(t), t)-\nabla_{\mathbf{x}(t)} \log p(\mathbf{x}(t) |\mathbf{x}(0))) \odot \mathbf{m}\|_2^2].
\end{aligned}
$$


Moreover, notice that we have 
$$
\begin{aligned}
&\mathbb{E}_{p(\mathbf{x}(0),\mathbf{m})}  \mathbb{E}_{p(\mathbf{x}(t) | \mathbf{x}(0))}   [\|(\mathbf{s}_{\boldsymbol{\theta}}(\mathbf{x}(t), t)-\nabla_{\mathbf{x}(t)} \log p(\mathbf{x}(t) |\mathbf{x}(0))) \odot \mathbf{m}\|_2^2] \\
&= \mathbb{E}_{p(\mathbf{x(0)},\mathbf{x(t)})}\| (\mathbf{s}_{\boldsymbol{\theta}}(\mathbf{x}(t), t)-\nabla_{\mathbf{x}(t)} \log p_t(\mathbf{x}(t) )) \odot \sqrt{\mathbb{E}_{p(\mathbf{m}|\mathbf{x}(0))}[\mathbf{m}]}\|_2^2],
\end{aligned}
$$
where $\sqrt{\boldsymbol{z}}$ denotes the element-wise operation on vector $\boldsymbol{z}$. The last equation is because we take the conditional expectation of the binary mask $\mathbf{m}$ and since $\mathbf{m}_i\in\{0,1\}$ we have $\mathbb{E}[\mathbf{m}_i^2]=\mathbb{E}[\mathbf{m}_i]$ for any distribution of $\mathbf{m}$. Assuming that $\mathbb{E}_{p(\mathbf{m}|\mathbf{x}(0))}[\mathbf{m}]\equiv \mathbf{1}-\boldsymbol\rho$ with $\boldsymbol\rho =[\rho_1,\ldots,\rho_d]$ and $\rho_i<1$, $i\in \{1,2, ..., d\}$ being the population percentage of missing samples for the $i$-th entry, we have $\mathbb{E}_{p(\mathbf{m}|\mathbf{x}(0))}[\mathbf{m}]>0$ and thus we can show the global optimal of Denoising Score Matching on missing data is the same as the oracle score.

\subsection{Proof of Theorem \ref{thm:ml_real}}\label{ap:proof2}

The notations are defined as follows. We let $\pi$ denote the pre-specified prior distribution (e.g., the standard normal distribution), $\mathcal{C}$ denote all continuous functions, and $\mathcal{C}^k$ denote the family of functions with continuous $k$-th order derivatives. Consider the MCAR missing mechanism. Denote $\rho_i$, $i\in \{1,2, ..., d\}$ as the population percentage of missing samples for the $i$-th entry in the training data. Suppose $\max_{i=1,\ldots,d}\rho_i<1$. In addition, we make the same mild regularity assumptions as \cite{Song2021MaximumLT} in the following. 
\begin{assumption} 
\begin{enumerate}[label=(\roman*)]
    \item $p(\mathbf{x}) \in \mathcal{C}^2$ and $\mathbb{E}_{\mathbf{x} \sim p_0}[\|\mathbf{x}\|_2^2]<\infty$.
\item $\pi(\mathbf{x}) \in \mathcal{C}^2$ and $\mathbb{E}_{\mathbf{x} \sim \pi}[\|\mathbf{x}\|_2^2]<\infty$.
\item $\forall t \in[0, T]: f(\cdot, t) \in \mathcal{C}^1, \exists C>0, \forall \mathbf{x} \in \mathbb{R}^d, t \in[0, T]:\|f(\mathbf{x}, t)\|_2 \leq C(1+\|\mathbf{x}\|_2)$.
\item $\exists C>0, \forall \mathbf{x}, \mathbf{y} \in \mathbb{R}^d:\|f(\mathbf{x}, t)-f(\mathbf{y}, t)\|_2 \leq C\|\mathbf{x}-\mathbf{y}\|_2$.
\item $g \in \mathcal{C}$ and $\forall t \in[0, T],|g(t)|>0$.
\item For any open bounded set $\mathcal{O}, \int_0^T \int_{\mathcal{O}}\|p_t(\mathbf{x})\|_2^2+d g(t)^2\|\nabla_{\mathbf{x}} p_t(\mathbf{x})\|_2^2 \mathrm{~d} \mathbf{x} \mathrm{d} t<\infty$.
\item $\exists C>0 \forall \mathbf{x} \in \mathbb{R}^d, t \in[0, T]:
\|\nabla_{\mathbf{x}} \log p_t(\mathbf{x})
\|_2 \leq C
(1+\|\mathbf{x}\|_2)$.
\item $\exists C>0, \forall \mathbf{x}, \mathbf{y} \in \mathbb{R}^d:\|\nabla_{\mathbf{x}} \log p_t(\mathbf{x})-\nabla_{\mathbf{y}} \log p_t(\mathbf{y})\|_2 \leq C\|\mathbf{x}-\mathbf{y}\|_2$.
\item $\exists C>0 \forall \mathbf{x} \in \mathbb{R}^d, t \in[0, T]:\|\mathbf{s}_{\boldsymbol{\theta}}(\mathbf{x}, t)\|_2 \leq C(1+\|\mathbf{x}\|_2)$.
\item $\exists C>0, \forall \mathbf{x}, \mathbf{y} \in \mathbb{R}^d:\|\mathbf{s}_{\boldsymbol{\theta}}(\mathbf{x}, t)-\mathbf{s}_{\boldsymbol{\theta}}(\mathbf{y}, t)\|_2 \leq C\|\mathbf{x}-\mathbf{y}\|_2$.
\item Novikov's condition: $\mathbb{E}[\exp (\frac{1}{2} \int_0^T\|\nabla_{\mathbf{x}} \log p_t(\mathbf{x})-\mathbf{s}_{\boldsymbol{\theta}}(\mathbf{x}, t)\|_2^2 \mathrm{~d} t)]<\infty$.
\item $\forall t \in[0, T], \exists k>0: p_t(\mathbf{x})=O(e^{-\|\mathbf{x}\|_2^k})$ as $\|\mathbf{x}\|_2 \rightarrow \infty$.
\end{enumerate}
\end{assumption} 

We mainly follow the proof strategy in \cite{Song2021MaximumLT}.
Consider the predefined SDE on the observed data, 
\begin{equation}\label{eq:5u}
    \mathrm{d} \mathbf{x}^{\text{obs}}=f(\mathbf{x}^{\text{obs}}, t) \mathrm{d} t+g(t) \mathrm{d} \mathbf{w},
\end{equation}
and the SDE parametrized by $\mathbf{\theta}$,
\begin{equation}
    \mathrm{d} \hat{\mathbf{x}}^{\text{obs}}_{\mathbf{\theta}}=\mathbf{s}_{\boldsymbol{\theta}}(\hat{\mathbf{x}}^{\text{obs}}_{\mathbf{\theta}}, t) \mathrm{d} t+g(t) \mathrm{d} \mathbf{w}.
\end{equation}

Let $\boldsymbol{\mu}$ and $\boldsymbol{\nu}$ denote the path measure of $\{\mathbf{x}^{\text{obs}}(t)\}_{t \in[0, T]}$ and $\{\hat{\mathbf{x}}^{\text{obs}}_{\mathbf{\theta}}(t)\}_{t \in[0, T]}$, respectively. Therefore, the distribution of $p_0(\mathbf{x})$ and $p_{\mathbf{\theta}}(\mathbf{x})$ can be represented by the Markov kernel $K(\{\mathbf{z}(t)\}_{t \in[0, T]}, \mathbf{y}):=\delta(\mathbf{z}(0)=\mathbf{y})$ as follow:
$$
\begin{gathered}
p_0(\mathbf{x}) = \int K(\{\mathbf{x}^{\text{obs}}(t)\}_{t \in[0, T]}, \mathbf{x}) \mathrm{d} \boldsymbol{\mu}(\{\mathbf{x}^{\text{obs}}(t)\}_{t \in[0, T]}),\\
p_{\mathbf{\theta}}(\mathbf{x})=\int K(\{\hat{\mathbf{x}}^{\text{obs}}_{\mathbf{\theta}}(t)\}_{t \in[0, T]}, \mathbf{x}) \mathrm{d} \boldsymbol{\nu}(\{\hat{\mathbf{x}}^{\text{obs}}_{\mathbf{\theta}}(t)\}_{t \in[0, T]}).
\end{gathered}
$$
According to the data processing inequality with this Markov kernel, the Kullback–Leibler (KL) divergence between the distribution of $p_0(\mathbf{x})$ and $p_{\mathbf{\theta}}(\mathbf{x})$ can be upper bounded, i.e., 
\begin{align}\label{eq:1u}
D_{\mathrm{KL}}(p_0 \| p_{\mathbf{\theta}}) 
= & D_{\mathrm{KL}}\Big(\int K(\{\mathbf{x}^{\text{obs}}(t)\}_{t \in[0, T]}, \mathbf{x}) \mathrm{d} \boldsymbol{\mu} \big\| \int K(\{\hat{\mathbf{x}}^{\text{obs}}_{\mathbf{\theta}}(t)\}_{t \in[0, T]}, \mathbf{x}) \mathrm{d} \boldsymbol{\nu}\Big) 
\leq  D_{\mathrm{KL}}(\boldsymbol{\mu} \| \boldsymbol{\nu}) .
\end{align}

By the chain rule of KL divergences,
\begin{equation}\label{eq:2u}
D_{\mathrm{KL}}(\boldsymbol{\mu} \| \boldsymbol{\nu})=D_{\mathrm{KL}}(p_T \| \pi)+\mathbb{E}_{\mathbf{z} \sim p_T}[D_{\mathrm{KL}}(\boldsymbol{\mu}(\cdot \mid \mathbf{x}^{\text{obs}}(T)=\mathbf{z}) \| \boldsymbol{\nu}(\cdot \mid \hat{\mathbf{x}}^{\text{obs}}_{\mathbf{\theta}}(T)=\mathbf{z}))] .
\end{equation}
Under assumptions (i) (iii) (iv) (v) (vi) (vii) (viii), the SDE in Eq \eqref{eq:5u} has a corresponding reverse-time SDE given by
\begin{equation}\label{eq:4u}
\mathrm{d} \mathbf{x}^{\text{obs}}=[f(\mathbf{x}^{\text{obs}}, t)-g(t)^2 \nabla_{\mathbf{x}^{\text{obs}}} \log p_t(\mathbf{x}^{\text{obs}})] \mathrm{d} t+g(t) \mathrm{d} \overline{\mathbf{w}} .
\end{equation}
Since Eq \eqref{eq:4u} is the time reversal of Eq \eqref{eq:5u}, it induces the same path measure $\boldsymbol{\mu}$. As a result, $D_{\mathrm{KL}}(\boldsymbol{\mu}(\cdot \mid \mathbf{x}^{\text{obs}}(T)=\mathbf{z}) \| \boldsymbol{\nu}(\cdot \mid \hat{\mathbf{x}}^{\text{obs}}_{\mathbf{\theta}}(T)=\mathbf{z}))$ can be viewed as the KL divergence between the path measures induced by the following two (reverse-time) SDEs:
$$
\begin{gathered}
\mathrm{d} \mathbf{x}^{\text{obs}}=[f(\mathbf{x}^{\text{obs}}, t)-g(t)^2 \nabla_{\mathbf{x}^{\text{obs}}} \log p_t(\mathbf{x}^{\text{obs}})] \mathrm{d} t+g(t) \mathrm{d} \overline{\mathbf{w}}, \quad \mathbf{x}^{\text{obs}}(T)=\mathbf{x}^{\text{obs}}, \\
\mathrm{d} \hat{\mathbf{x}}^{\text{obs}}=[f(\hat{\mathbf{x}}^{\text{obs}}, t)-g(t)^2 \mathbf{s}_{\boldsymbol{\theta}}(\hat{\mathbf{x}}^{\text{obs}}, t)] \mathrm{d} t+g(t) \mathrm{d} \overline{\mathbf{w}}, \quad \hat{\mathbf{x}}^{\text{obs}}_{\mathbf{\theta}}(T)=\mathbf{x}^{\text{obs}} .
\end{gathered}
$$

Under assumptions (vii) (viii) (ix) (x) (xi), we apply the Girsanov Theorem II [\citep{ksendal1987StochasticDE}, Theorem 8.6.6], together with the martingale property of Itô integrals, which yields
\begin{equation}\label{eq:3u}
 \begin{aligned}
 & D_{\mathrm{KL}}(\boldsymbol{\mu}(\cdot \mid \mathbf{x}^{\text{obs}}(T)=\mathbf{z}) \| \boldsymbol{\nu}(\cdot \mid \hat{\mathbf{x}}^{\text{obs}}_{\mathbf{\theta}}(T)=\mathbf{z})) \\
&= \mathbb{E}_{\boldsymbol{\mu}}[\frac{1}{2} \int_0^T g(t)^2\|\nabla_{\mathbf{x}^{\text{obs}}(t)} \log p_t(\mathbf{x}^{\text{obs}}(t))-\mathbf{s}_{\boldsymbol{\theta}}(\mathbf{x}^{\text{obs}}(t), t)\|_2^2 \mathrm{~d} t] \\
&\leq \frac{1}{2(1-\rho_{\text{max}})} \int_0^T \mathbb{E}_{p_t(\mathbf{x}^{\text{obs}}(t))}[g(t)^2\|\nabla_{\mathbf{x}^{\text{obs}}(t)} \log p_t(\mathbf{x}^{\text{obs}}(t))-\mathbf{s}_{\boldsymbol{\theta}}(\mathbf{x}^{\text{obs}}(t), t) \odot \sqrt{\mathbf{1}-\boldsymbol{\rho}}\|_2^2] \mathrm{d} t \\
&=  \frac{1}{2(1-\rho_{\text{max}})} \int_0^T \mathbb{E}_{p_t(\mathbf{x}^{\text{obs}}(t))}[g(t)^2\|\nabla_{\mathbf{x}^{\text{obs}}(t)} \log p_t(\mathbf{x}^{\text{obs}}(t))-\mathbf{s}_{\boldsymbol{\theta}}(\mathbf{x}^{\text{obs}}(t), t) \odot \mathbf{m}\|_2^2] \mathrm{d} t \\
&=  \frac{1}{1-\rho_{\text{max}}}J_{\mathrm{SM}}(\mathbf{\theta} ; g(\cdot)^2),
 \end{aligned}
\end{equation}
where $\rho_{\text{max}} = \max_{i=1,\ldots,d} \rho_i$ and $1-\rho_{\text{max}}>0$ by assumption. 
Combining Eqs. \eqref{eq:1u}, \eqref{eq:2u} and \eqref{eq:3u}, we have $D_{\mathrm{KL}}(p_0 \| p_{\boldsymbol{\theta}}) \leq \frac{1}{1-\rho_{\text{max}}}J_{\mathrm{SM}}(\boldsymbol{\theta} ; g(\cdot)^2)+D_{\mathrm{KL}}(p_T \| \pi)$, which further yields $-\mathbb{E}_{p(\mathbf{x}^{\text{obs}})}[\log p_{\mathbf{\theta}}(\mathbf{x})] \leq \frac{1}{1-\rho_{\text{max}}}J_{\mathrm{DSM}}(\boldsymbol{\theta} ; g(\cdot)^2)+C_1$ by Lemma \ref{DSM-equ}, where $C_1$ is a constant independent of $\mathbf{\theta}$.

\begin{lemma}
\label{DSM-equ}
Denosing Score Matching on missing data is equivalent to Score Matching on missing data, i.e.,  
\begin{equation}\label{eq:lemma1}
\begin{aligned}
&\mathbb{E}_{p_t(\mathbf{x}^{\text{obs}})}[\|(\mathbf{s}_{\boldsymbol{\theta}}(\mathbf{x}^{\text{obs}}_t, t)-\nabla_{\mathbf{x}^{\text{obs}}} \log p_t(\mathbf{x}^{\text{obs}}_t))\odot \mathbf{m}\|_2^2]   \\ &=\mathbb{E}_{p(\mathbf{x}^{\text{obs}}_0)}  \mathbb{E}_{p(\mathbf{x}^{\text{obs}}_t \mid \mathbf{x}^{\text{obs}}_0)}[\|(\mathbf{s}_{\boldsymbol{\theta}}(\mathbf{x}^{\text{obs}}_t, t)-\nabla_{\mathbf{x}^{\text{obs}}_t} \log p(\mathbf{x}^{\text{obs}}_t \mid \mathbf{x}^{\text{obs}}_0)) \odot \mathbf{m}\|_2^2]+C,
\end{aligned}    
\end{equation}
where $\mathbf{m}=\mathbbm{1}\{\mathbf{x}^{\text{obs}}_0=\mathrm{na}\}$ indicated the missing entries in $\mathbf{x}^{\text{obs}}$ and $C$ is a constant that does not depend on $\boldsymbol{\theta}$. We interchange $\mathbf{x}^{\text{obs}}(t)$ with $\mathbf{x}^{\text{obs}}_t$.
\end{lemma}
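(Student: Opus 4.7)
The plan is to mimic Vincent's (2011) classical equivalence between denoising score matching and implicit score matching, inserting the mask $\mathbf{m}$ into the standard argument. First I would expand both squared norms as three terms: (i) a pure quadratic $\|\mathbf{s}_\theta(\mathbf{x}_t^{\text{obs}},t)\odot\mathbf{m}\|^2$, (ii) a cross term pairing $\mathbf{s}_\theta$ with the appropriate score (marginal on the LHS, conditional on the RHS), and (iii) a pure-score quadratic independent of $\theta$. Since $\mathbf{s}_\theta\odot\mathbf{m}$ depends only on $(\mathbf{x}_t^{\text{obs}},\mathbf{m})$, marginalizing $\mathbf{x}_0^{\text{obs}}$ out of $p(\mathbf{x}_0^{\text{obs}},\mathbf{x}_t^{\text{obs}},\mathbf{m})$ shows that the terms in (i) coincide on both sides; the terms in (iii) involve no $\theta$ and so their difference is absorbed into the constant $C$.

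The heart of the argument is to show that the two cross terms in (ii) are equal. Working component-wise, on the RHS one applies the identity $p(\mathbf{x}_t^{\text{obs}}\mid\mathbf{x}_0^{\text{obs}})\,\nabla\log p(\mathbf{x}_t^{\text{obs}}\mid\mathbf{x}_0^{\text{obs}})=\nabla p(\mathbf{x}_t^{\text{obs}}\mid\mathbf{x}_0^{\text{obs}})$, exchanges the order of integration, and uses
\begin{equation*}
\int p(\mathbf{x}_0^{\text{obs}},\mathbf{m})\,\mathbf{m}_i\,p(\mathbf{x}_t^{\text{obs}}\mid\mathbf{x}_0^{\text{obs}})\,d\mathbf{x}_0^{\text{obs}}\,d\mathbf{m} = p_t(\mathbf{x}_t^{\text{obs}})\,\mathbb{E}[\mathbf{m}_i\mid\mathbf{x}_t^{\text{obs}}],
\end{equation*}
which rewrites the RHS cross term as $\sum_i\int \mathbf{s}_{\theta,i}\,\nabla_{\mathbf{x}_t^{\text{obs}},i}\!\left[p_t(\mathbf{x}_t^{\text{obs}})\,\mathbb{E}[\mathbf{m}_i\mid\mathbf{x}_t^{\text{obs}}]\right]d\mathbf{x}_t^{\text{obs}}$. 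A parallel and simpler computation on the LHS yields $\sum_i\int\mathbf{s}_{\theta,i}\,\mathbb{E}[\mathbf{m}_i\mid\mathbf{x}_t^{\text{obs}}]\,[\nabla p_t(\mathbf{x}_t^{\text{obs}})]_i\,d\mathbf{x}_t^{\text{obs}}$.

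The main obstacle, brought out by the product rule on the RHS expression above, is an extra term $\int\mathbf{s}_{\theta,i}\,p_t(\mathbf{x}_t^{\text{obs}})\,\nabla_{\mathbf{x}_t^{\text{obs}},i}\mathbb{E}[\mathbf{m}_i\mid\mathbf{x}_t^{\text{obs}}]\,d\mathbf{x}_t^{\text{obs}}$ that has no analogue on the LHS. This term vanishes exactly when $\mathbb{E}[\mathbf{m}_i\mid\mathbf{x}_t^{\text{obs}}]$ is constant in $\mathbf{x}_t^{\text{obs}}$; under the MCAR hypothesis that underlies Theorem~\ref{thm:ml_real}, $\mathbf{m}$ is independent of $\mathbf{x}_0^{\text{obs}}$ and hence of $\mathbf{x}_t^{\text{obs}}$ (the forward SDE only adds independent noise to $\mathbf{x}_0^{\text{obs}}$), so $\mathbb{E}[\mathbf{m}_i\mid\mathbf{x}_t^{\text{obs}}]\equiv 1-\rho_i$ and the unwanted term drops out. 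The regularity conditions collected in Appendix~\ref{ap:proof2} (the tail decay $p_t(\mathbf{x})=O(e^{-\|\mathbf{x}\|^k})$ together with the linear-growth and Lipschitz bounds on $\mathbf{s}_\theta$ and $\nabla\log p_t$) justify the Fubini exchange and the interchange of gradient with integral, killing the boundary contributions from integration by parts, after which the two cross terms match and the identity of Lemma~\ref{DSM-equ} follows with $C$ equal to the difference of the two pure-score quadratics.
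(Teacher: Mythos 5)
Your proposal is correct and follows the same Vincent-style route as the paper: expand both squared norms, note the $\theta$-quadratic terms coincide and the pure-score terms go into $C$, and reduce everything to matching the two cross terms via $p\,\nabla\log p=\nabla p$, Fubini, and an exchange of gradient and integral. The one substantive difference is that you are more careful than the paper about where the mask lives. The paper's computation passes from $\nabla_{\mathbf{x}^{\text{obs}}_t}p_t(\mathbf{x}^{\text{obs}}_t)\odot\mathbf{m}$ to $\frac{\mathrm{d}}{\mathrm{d}\mathbf{x}^{\text{obs}}_t}\int p_0(\mathbf{x}^{\text{obs}}_0)\,p(\mathbf{x}^{\text{obs}}_t\mid\mathbf{x}^{\text{obs}}_0)\odot\mathbf{m}\,\mathrm{d}\mathbf{x}^{\text{obs}}_0$ in a single step, silently commuting $\mathbf{m}$ (a function of $\mathbf{x}^{\text{obs}}_0$) with the marginalization over $\mathbf{x}^{\text{obs}}_0$; that is precisely the step at which your extra term $\sum_i\int \mathbf{s}_{\theta,i}\,p_t\,\nabla_{i}\mathbb{E}[\mathbf{m}_i\mid\mathbf{x}^{\text{obs}}_t]$ would survive and, since it depends on $\boldsymbol{\theta}$, could not be absorbed into $C$. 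Your observation that this term vanishes exactly when $\mathbb{E}[\mathbf{m}_i\mid\mathbf{x}^{\text{obs}}_t]$ is constant — which holds under the MCAR assumption explicitly made at the start of Appendix A.2 — is the correct justification for the step the paper takes implicitly, so your write-up both recovers the paper's result and makes visible the precise point where the missingness mechanism enters the argument.
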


\begin{proof}
We begin with the Score Matching on the left-hand side of \eqref{eq:lemma1}
\begin{equation}
\label{eq:ESM}
\begin{aligned}
\text{LHS} &=\mathbb{E}_{p_t(\mathbf{x}^{\text{obs}}_t)}[\|(\mathbf{s}_{\boldsymbol{\theta}}(\mathbf{x}^{\text{obs}}_t, t)-\nabla_{\mathbf{x}^{\text{obs}}_t} \log p_t(\mathbf{x}^{\text{obs}}_t))\odot \mathbf{m}\|_2^2] \\
&=\mathbb{E}_{p_t(\mathbf{x}^{\text{obs}}_t)}[\|\mathbf{s}_{\boldsymbol{\theta}}(\mathbf{x}^{\text{obs}}_t, t)\odot \mathbf{m}\|^2]-S(\theta)+C_2,
\end{aligned}
\end{equation}
where $C_2=\mathbb{E}_{p_t(\mathbf{x}^{\text{obs}}_t)}[\|\nabla_{\mathbf{x}^{\text{obs}}_t} \log p_t(\mathbf{x}^{\text{obs}}_t)\odot \mathbf{m}\|^2]$ is a constant that does not depend on $\boldsymbol{\theta}$, and
$$
\begin{aligned}
S(\theta) & =2\mathbb{E}_{p_t(\mathbf{x}^{\text{obs}}_t)}[\langle\mathbf{s}_{\boldsymbol{\theta}}(\mathbf{x}^{\text{obs}}_t, t), \nabla_{\mathbf{x}^{\text{obs}}_t} \log p_t(\mathbf{x}^{\text{obs}}_t)\odot \mathbf{m}\rangle] \\
& =2\int_{\mathbf{x}^{\text{obs}}_t} p_t(\mathbf{x}^{\text{obs}}_t)\langle\mathbf{s}_{\boldsymbol{\theta}}(\mathbf{x}^{\text{obs}}_t, t), \nabla_{\mathbf{x}^{\text{obs}}_t} \log p_t(\mathbf{x}^{\text{obs}}_t)\odot \mathbf{m}\rangle  \mathrm{~d}\mathbf{x}^{\text{obs}}_t \\
& =2\int_{\mathbf{x}^{\text{obs}}_t} \langle\mathbf{s}_{\boldsymbol{\theta}}(\mathbf{x}^{\text{obs}}_t, t), \nabla_{\mathbf{x}^{\text{obs}}_t} p_t(\mathbf{x}^{\text{obs}}_t)\odot \mathbf{m}\rangle \mathrm{~d}\mathbf{x}^{\text{obs}}_t \\
& =2\int_{\mathbf{x}^{\text{obs}}_t}\langle\mathbf{s}_{\boldsymbol{\theta}}(\mathbf{x}^{\text{obs}}_t, t), \frac{\mathrm{d}}{\mathrm{d} \mathbf{x}^{\text{obs}}_t} \int_{\mathbf{x}^{\text{obs}}_0} p_0(\mathbf{x}^{\text{obs}}_0) p(\mathbf{x}^{\text{obs}}_t \mid \mathbf{x}^{\text{obs}}_0)  \odot \mathbf{m}\mathrm{~d} \mathbf{x}^{\text{obs}}_0 \rangle \mathrm{~d}\mathbf{x}^{\text{obs}}_t \\
& =2\int_{\mathbf{x}^{\text{obs}}_t} \int_{\mathbf{x}^{\text{obs}}_0} p_0(\mathbf{x}^{\text{obs}}_0) p(\mathbf{x}^{\text{obs}}_t \mid \mathbf{x}^{\text{obs}}_0)\langle\mathbf{s}_{\boldsymbol{\theta}}(\mathbf{x}^{\text{obs}}_t, t), \frac{\mathrm{d}\log p(\mathbf{x}^{\text{obs}}_t \mid \mathbf{x}^{\text{obs}}_0) }{\mathrm{d} \mathbf{x}^{\text{obs}}_t} \odot \mathbf{m}  \rangle \mathrm{~d} \mathbf{x}^{\text{obs}}_0 \mathrm{d}\mathbf{x}^{\text{obs}}_t \\
& =2\mathbb{E}_{p(\mathbf{x}^{\text{obs}}_t, \mathbf{x}^{\text{obs}}_0)}[\langle\mathbf{s}_{\boldsymbol{\theta}}(\mathbf{x}^{\text{obs}}_t, t), \frac{\mathrm{d}\log p(\mathbf{x}^{\text{obs}}_t \mid \mathbf{x}^{\text{obs}}_0) }{\mathrm{d} \mathbf{x}^{\text{obs}}_t} \odot \mathbf{m}\rangle] .
\end{aligned}
$$
Substituting this expression for $S(\theta)$ into 
Eq \eqref{eq:ESM} yields
\begin{equation}
\label{eq:ESMv2}
\begin{aligned}
\text{LHS} & = \mathbb{E}_{p_t(\mathbf{x}^{\text{obs}}_t)}[\|\mathbf{s}_{\boldsymbol{\theta}}(\mathbf{x}^{\text{obs}}_t, t)\odot \mathbf{m}\|^2] \\
& -2\mathbb{E}_{p(\mathbf{x}^{\text{obs}}_t, \mathbf{x}^{\text{obs}}_0)}[\langle\mathbf{s}_{\boldsymbol{\theta}}(\mathbf{x}^{\text{obs}}_t, t), \frac{\mathrm{d}\log p(\mathbf{x}^{\text{obs}}_t \mid \mathbf{x}^{\text{obs}}_0) }{\mathrm{d} \mathbf{x}^{\text{obs}}_t} \odot \mathbf{m}\rangle]+C_2 .
\end{aligned}
\end{equation}

On the other hand, we also have the Denoising Score Matching objective on the right-hand side of \eqref{eq:lemma1} is 
\begin{equation}
\label{eq:DSM}
\begin{aligned}
\text{RHS} & =\mathbb{E}_{p_t(\mathbf{x}^{\text{obs}}_t)}[\|\mathbf{s}_{\boldsymbol{\theta}}(\mathbf{x}^{\text{obs}}_t, t)\odot \mathbf{m}\|^2] \\
& -2\mathbb{E}_{p(\mathbf{x}^{\text{obs}}_t, \mathbf{x}^{\text{obs}}_0)}[\langle\mathbf{s}_{\boldsymbol{\theta}}(\mathbf{x}^{\text{obs}}_t, t), \frac{\mathrm{d}\log p_t(\mathbf{x}^{\text{obs}}_t \mid \mathbf{x}^{\text{obs}}_0) }{\mathrm{d} \mathbf{x}^{\text{obs}}_t}\rangle\odot \mathbf{m}]+C_3,
\end{aligned}
\end{equation}
where $C_3=\mathbb{E}_{p(\mathbf{x}^{\text{obs}}_t, \mathbf{x}^{\text{obs}}_0)}[\|\frac{\mathrm{d}\log p_t(\mathbf{x}^{\text{obs}}_t \mid \mathbf{x}^{\text{obs}}_0) }{\mathrm{d} \mathbf{x}^{\text{obs}}_t} \odot \mathbf{m}\|^2]+C$ is a constant that does not depend on $\boldsymbol{\theta}$.

Comparing equations \eqref{eq:ESMv2} and \eqref{eq:DSM}, we thus show that the two optimization objectives are equivalent up to a constant.   
\end{proof}

\section{More Details on Experiments}
\subsection{Datasets}\label{sec:bayesdata}
\paragraph{Details of the Bayesian Network} \label{ap:BN}
Figure \ref{fig:BN} demonstrates the Bayesian Network for generating the tabular data. It contains two continuous variables C1, C2, and three discrete random variables D1, D2, and D3. 
The distribution of these variables is set as follows. The marginal distribution of C1 is $\mathcal{N}(25, 2)$, the conditional distribition of C2 given C1 is $\text{C2}|\text{C1} \sim \mathcal{N}(0.1 \cdot \text{C1} +50 , 5)$, and the marginal distribution of D1 is $Bernoulli(0.3)$, where $Bernoulli(\xi)$ stands for the Bernoulli distribution with mean equal to $\xi$.
The conditional distribution of D2, given C1, C2 and D1, is set as 
\[ \text{D2}|\text{C1}, \text{C2}, \text{D1} \sim  \begin{cases} 
      Ca(0.3,0.6,0.1) & \text{C1}>26 , \text{C2}>55 , \text{D1}=1; \\
      Ca(0.2,0.3,0.5) & \text{C1}>26 , \text{C2}\leq55 , \text{D1}=1; \\
      Ca(0.7,0.1,0.2) & \text{C1}\leq26 , \text{C2}>55 , \text{D1}=1; \\ 
      Ca(0.1,0.2,0.7) & \text{C1}\leq26 , \text{C2}\leq55 , \text{D1}=1; \\
      Ca(0.05,0.05,0.9) &  \text{D1}=0, \\ 
   \end{cases}
\] where $Ca(p1,p2, 1-p1-p2)$ denotes the categorical (discrete) distribution for three pre-specified categories. The conditional distribution of D3 given D2 is 
\[  \text{D3}|\text{D2} \sim \begin{cases} 
      Bernoulli(0.2) & \text{D2}=0; \\
      Bernoulli(0.4) & \text{D2}=1; \\
      Bernoulli(0.8) & \text{D2}=2.
   \end{cases}
\]

\begin{figure}
\centering
  \includegraphics[scale=0.9]{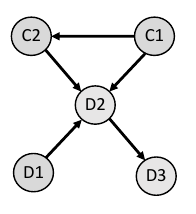}
  \caption{The demonstration of the Bayesian Network for generating the tabular data. ``C1'' and ``C2'' denote the continuous variables and ``D1'', ``D2'', ``D3'' denotes the discrete random variables. The marginal/conditional distributions for each node are detailed in Section \ref{sec:bayesdata}.}
  \label{fig:BN}
\end{figure}

\subsection{Implementation Details}\label{ap:training}

We adopt four layers residual network as the backbone of the diffusion model. The dimension of the diffusion embedding is 128 with channels as 64. We set the minimum noise level $\beta_1=0.0001$ and the maximum noise level $\beta_T=0.5$ in Algorithm \ref{alg} and Algorithm \ref{alg1} with quadratic schedule
$$
\beta_t=\left(\frac{T-t}{T-1} \sqrt{\beta_1}+\frac{t-1}{T-1} \sqrt{\beta_T}\right)^2.
$$ 
We mainly follow the hyperparameter in the previous works that train the diffusion model on tabular data \cite{Tashiro2021CSDICS,Zheng2022DiffusionMF}. 
We use the Adam optimizer with MultiStepLR with 0.1 decay at $25 \%, 50 \%, 75 \%$, and $90 \%$ of the total epochs and with an initial learning rate as 0.0005. 

With regard to the baselines of STaSy, we adopt the same setting of its open resource implementation \footnote{\url{https://openreview.net/forum?id=1mNssCWt_v}}, i.e., Varaince Exploding SDE with six layers ConcatSquash network as the backbone of the diffusion model and Fourier embedding, the adam optimizer with learning rate as 2e-03, training with batch size 64 and 250 epochs/1000 epochs with additional 50 finetuning epochs.

For the downstream classifier/regressor, we adopt the same base hyperparameters in [\cite{Kim2023STaSyST}, Table 26].

\subsection{Additional Experiental Results}\label{ap:exper}

\subsubsection{Additional Results for Other Criteria for {\it Utility} Evaluation}
Table \ref{tab:t2d}, \ref{tab:t4d}, and \ref{tab:t6d} provide the additional experimental results for other criteria under {\it Utility} evaluation for Table \ref{tab:t2}, \ref{tab:t4}, and \ref{tab:t6} in the main paper, i.e., the F1, Weighted-F1, AUROC for the classification task and $R^2$ for the regression task. A detailed explanation of the above-mentioned criteria can be found in \cite{Kim2023STaSyST}. To make our paper self-contained, we briefly restate it here.
\begin{enumerate}
    \item Binary F1 for binary classification: sklearn.metrics.f1\_score with `average'=`binary'.
    \item Macro F1 for multi-class classification: sklearn.metrics.f1\_score with `average'=`macro'.
    \item Weighted-F1: $=\sum_{i=0}^K w_i s_i$, where $K$ denotes the number of classes, the weight of $i$-th class $w_i$ is $\frac{1-p_i}{K-1}, p_i$ is the proportion of $i$-th class's cardinality in the whole  dataset, and score $s_i$ is a per-class F1 of $i$-th class (in a One-vs-Rest manner).
    \item AUROC: sklearn.metrics.roc\_auc\_score.
\end{enumerate}
From the results in Table \ref{tab:t2d}, \ref{tab:t4d}, and \ref{tab:t6d}, it can be seen that the proposed \textit{MissDiff} consistently outperforms the compared methods in most instances. For the column missing case, \textit{MissDiff} tends to perform worse, which indicates the potential limitations of the proposed method for future investigations.

\begin{table}[htbp]
\centering

 \caption{{\it Utility} evaluation of {\it MissDiff} on Census dataset with other criteria. ``-'' denotes the corresponding method cannot applied since no data $\mathbf{x}_i$ will be left after deleting the incomplete data.}
  \resizebox{\textwidth}{!}{
\begin{tabular}{c|c|c|c|c|c|c}
\toprule  Criterian & Missing Mechanism & {\it MissDiff} & {\it Diff-delete}  & {\it Diff-mean}  & {\it STaSy-delete}  & {\it STaSy-mean} \\
\midrule
\multirow{3}[0]{*}{Binary F1}&Row Missing                        & \textbf{0.344}  &    -     & 0.280   & -  &  0.314 \\
&Column  Missing                    & 0.141   & 0.063 & 0.413  & \textbf{0.509} & 0.383\\
&Independent Missing                & \textbf{0.291}   & 0.045 & 0.225 & 0.274 & 0.241\\ 
\midrule
\multirow{3}[0]{*}{Weighted-F1}&Row Missing                        & \textbf{0.470}   &    -     & 0.423   & -  &  0.488 \\
&Column  Missing                    & 0.305   & 0.249 & 0.523  & \textbf{0.571} & 0.490\\
&Independent Missing                & \textbf{0.431}   & 0.237 & 0.375 & 0.416 & 0.389\\ 
\midrule
\multirow{3}[0]{*}{AUROC}&Row Missing                        & \textbf{0.772}   &    -     & 0.685   & -  &  0.731 \\
&Column  Missing                    & 0.539   & 0.469 & \textbf{0.757}  & 0.750 & 0.637\\
&Independent Missing                & \textbf{0.650}   & 0.474 & 0.655 & 0.621 & 0.613\\ 

\bottomrule
\end{tabular}}
\label{tab:t2d}
\end{table}

\begin{table}[ht!]
\centering
 \caption{{\it Utility} evaluation of {\it MissDiff} on MIMIC4ED dataset with $R^2$ criterion.}
\begin{tabular}{c|c|c|c|c|c}
\toprule Missing mechanism  & {\it MissDiff} & {\it Diff-delete}  & {\it Diff-mean}  & {\it STaSy-delete}  & {\it STaSy-mean}  \\
\midrule
Row Missing                          & \textbf{0.088}   &   -      & 0.057  &   -    & 0.067 \\
Rolumn Missing                     & \textbf{0.095}   & - & 0.023  &   -      &  0.073\\
Independent Missing                & \textbf{0.156}   & - & 0.062 &   -      &  0.142 \\ 

\bottomrule
\end{tabular}
\label{tab:t4d}
\end{table}

\begin{table}[ht!]
\centering
 \caption{{\it Utility} evaluation of {\it MissDiff} on Census dataset under MAR, NMAR with other criteria.}
\begin{tabular}{c|c|c|c|c}
\toprule  Criterian & Missing Mechanism  & {\it MissDiff} & {\it Diff-delete}  & {\it Diff-mean}    \\
\midrule
\multirow{2}[0]{*}{Binary F1}& MAR                         & \textbf{0.346}	& 0.108	& 0.224   \\
& NMAR                     & \textbf{0.464}    &	0.233    &	0.383   \\
\midrule
\multirow{2}[0]{*}{Weighted-F1}& MAR                         & \textbf{0.473}	& 0.276	& 0.376   \\
& NMAR                     & \textbf{0.564}    &	0.364    &	0.501   \\
\midrule
\multirow{2}[0]{*}{AUROC}& MAR                         & \textbf{0.833}	& 0.441	& 0.774   \\
& NMAR                     & \textbf{0.834}    &	0.499    &	0.746   \\
\bottomrule
\end{tabular}
\label{tab:t6d}
\end{table}

\subsubsection{Experiment Results for Different Classifiers/Regressors}

As mentioned in section \ref{sec:evaluation}, we train various models, including Decision Tree, AdaBoost, Logistic/Linear Regression, MLP classifier/regressor, RandomForest, and XGBoost, on synthetic data. Table \ref{tab:t9} to \ref{tab:t13} present the corresponding results on different classifiers/regressors, from which we can see that \textit{MissDiff} still performs well under most cases.

\begin{table}[htbp]
\centering
 \caption{{\it Utility} evaluation of {\it MissDiff} on Census dataset by Decision Tree.} 
\begin{tabular}{c|c|c|c|c|c}
\toprule   & {\it MissDiff} & {\it Diff-delete}  & {\it Diff-mean}  & {\it STaSy-delete}  & {\it STaSy-mean} \\
\midrule
Row Missing                       & \textbf{78.08}\%   &    -     & 74.55\%   & -  &  60.74\% \\
Column  Missing                    & 62.65\%   & 69.10\% & \textbf{78.88}\%  & 65.38\% & 66.31\%\\
independent                 & \textbf{80.68}\%   & 72.68\% & 67.70\% & 76.35\% & 55.99\%\\ 

\bottomrule
\end{tabular}
\label{tab:t9}
\end{table}

\begin{table}[htbp]
\centering
 \caption{{\it Utility} evaluation of {\it MissDiff} on Census dataset by AdaBoost.} 
\begin{tabular}{c|c|c|c|c|c}
\toprule   & {\it MissDiff} & {\it Diff-delete}  & {\it Diff-mean}  & {\it STaSy-delete}  & {\it STaSy-mean} \\
\midrule
Row Missing                       & \textbf{80.38}\%   &    -     & 79.28\%   & -  &  73.23\% \\
Column  Missing                    & 72.18\%   & 76.30\% & \textbf{80.65}\%  & 69.60\% & 42.24\%\\
independent                 & \textbf{78.70}\%   & 76.13\% & 75.96\% & 76.55\% & 78.39\%\\ 

\bottomrule
\end{tabular}
\label{tab:t10}
\end{table}

\begin{table}[htbp]
\centering
 \caption{{\it Utility} evaluation of {\it MissDiff} on Census dataset by Logistic Regression.} 
\begin{tabular}{c|c|c|c|c|c}
\toprule   & {\it MissDiff} & {\it Diff-delete}  & {\it Diff-mean}  & {\it STaSy-delete}  & {\it STaSy-mean} \\
\midrule
Row Missing                       & \textbf{79.20}\%   &    -     & 77.08\%   & -  &  71.04\% \\
Column  Missing                    & 73.50\%   & 76.30\% & \textbf{77.45}\%  & 66.91\% & 69.08\%\\
independent                 & 76.20\%   & \textbf{76.30}\% & 76.25\% & 77.13\% & 69.68\%\\ 

\bottomrule
\end{tabular}
\label{tab:t11}
\end{table}

\begin{table}[H]
\centering
 \caption{{\it Utility} evaluation of {\it MissDiff} on Census dataset by Multi-layer Perceptron (MLP).} 
\begin{tabular}{c|c|c|c|c|c}
\toprule   & {\it MissDiff} & {\it Diff-delete}  & {\it Diff-mean}  & {\it STaSy-delete}  & {\it STaSy-mean} \\
\midrule
Row Missing                       & \textbf{77.70}\%   &    -     & 75.13\%   & -  &  49.78\% \\
Column  Missing                    & 68.33\%   & 65.75\% & \textbf{75.00}\%  & 70.97\% & 58.83\%\\
independent                 & \textbf{75.33}\%   & 72.18\% & 74.30\% & 76.81\% & 37.59\%\\ 

\bottomrule
\end{tabular}
\label{tab:t12}
\end{table}

\begin{table}[H]
\centering
 \caption{{\it Utility} evaluation of {\it MissDiff} on Census dataset by Random Forest. } 
\begin{tabular}{c|c|c|c|c|c}
\toprule   & {\it MissDiff} & {\it Diff-delete}  & {\it Diff-mean}  & {\it STaSy-delete}  & {\it STaSy-mean} \\
\midrule
Row Missing                       & \textbf{80.10}\%   &    -     & 77.13\%   & -  &  72.68\% \\
Column  Missing                    & 73.68\%   & 76.33\% & \textbf{79.88}\%  & 74.70\% & 71.58\%\\
independent                 & \textbf{79.33}\%   & 76.30\% & 76.38\% & 76.31\% & 76.98\%\\ 

\bottomrule
\end{tabular}
\label{tab:t13}
\end{table}

\subsubsection{Additional Results for {\it STaSy-delete} and {\it STaSy-mean}}

In section \ref{sec:expri}, we mentioned if we train {\it STaSy-delete} and {\it STaSy-mean} as the same training epochs (250 epochs) on the Census dataset under MCAR as {\it MissDiff}, their performance is significantly worse, which are demonstrated in Table \ref{tab:t7} and \ref{tab:t8}. This observation highlights that the proposed \textit{MissDiff} requires considerably fewer training epochs compared to STaSy in order to achieve satisfactory results when handling data with missing values. 

\begin{table}[H]
\centering
 \caption{{\it Fidelity} evaluation of {\it MissDiff} on Census dataset with 250 training epochs.}
\begin{tabular}{c|c|c|c|c|c}
\toprule
 & {\it MissDiff} & {\it Diff-delete}  & {\it Diff-mean} & {\it STaSy-delete}  & {\it STaSy-mean} \\
\midrule
Row Missing                       & \textbf{80.59}\%   &     -    & 76.92\%  & -& 50.08\% \\
Column  Missing                    & \textbf{82.70}\%   & 75.03\% & 76.17\%  & 52.49\% & 49.63\% \\
independent                 & \textbf{83.16}\%   & 74.94\% & 76.60\% & 53.7\% & 50.11\% \\

\bottomrule
\end{tabular}
\label{tab:t7}
\end{table}

\begin{table}[H]
\centering
 \caption{{\it Utility} evaluation of {\it MissDiff} on Census dataset with 250 training epochs.} 
\begin{tabular}{c|c|c|c|c|c}
\toprule   & {\it MissDiff} & {\it Diff-delete}  & {\it Diff-mean}  & {\it STaSy-delete}  & {\it STaSy-mean} \\
\midrule
Row Missing                       & \textbf{79.48}\%   &    -     & 78.45\%   & -  &  60.96\% \\
Column  Missing                    & 71.68\%   & 72.89\% & \textbf{79.60}\%  & 56.19\% & 61.46\%\\
independent                 & \textbf{79.49}\%   & 75.39\% & 75.96\% & 49.78\% & 70.68\%\\ 

\bottomrule
\end{tabular}
\label{tab:t8}
\end{table}

\subsection{Computational Time}
All the experiments are conducted on NVIDIA A100 Tensor Core GPUs. It takes around 30 minutes for each experiment on Bayesian Network, around 5 hours for each experiment on the Census dataset, and around one day for each experiment on the MIMIC4ED dataset. 

\end{document}